\algnewcommand{\algorithmicinput}{\textbf{Input:}}
\algnewcommand{\Input}{\item[\algorithmicinput]}
\algnewcommand{\algorithmicoutput}{\textbf{Input:}}
\algnewcommand{\Output}{\item[\algorithmicoutput]}
\algnewcommand{\Break}{\textbf{break}}
\crefname{equation}{eq.}{eqs.}
\Crefname{equation}{Eq.}{Eqs.}
\crefname{algorithm}{Algorithm}{Algorithms}
\Crefname{algorithm}{Algorithm}{Algorithms}
\crefname{step}{Step}{Steps}
\Crefname{step}{Step}{Steps}
\crefname{theorem}{Theorem}{Theorems}
\Crefname{theorem}{Theorem}{Theorems}
\crefname{lemma}{Lemma}{Lemmas}
\Crefname{lemma}{Lemma}{Lemmas}
\crefname{proposition}{Proposition}{Propositions}
\Crefname{proposition}{Proposition}{Propositions}
\crefname{assumption}{Assumption}{Assumptions}
\Crefname{assumption}{Assumption}{Assumptions}
\crefname{definition}{Definition}{Definitions}
\Crefname{definition}{Definition}{Definitions}
\crefname{figure}{Figure}{Figures}
\Crefname{figure}{Figure}{Figures}
\crefname{table}{Table}{Tables}
\Crefname{table}{Table}{Tables}
\crefname{section}{Section}{Sections}
\Crefname{section}{Section}{Sections}
\crefname{appendix}{Appendix}{Appendices}
\Crefname{appendix}{Appendix}{Appendices}
\tikzstyle{tikzfig}=[baseline=-0.25em,scale=0.5]
\tikzstyle{none}=[inner sep=0mm]
\tikzstyle{rectangle}=[fill=white, draw=black, shape=rectangle]
\tikzstyle{circle}=[fill=white, draw=black, shape=circle]
\tikzstyle{vertex}=[fill=black, draw=none, shape=circle]
\tikzstyle{textbox}=[fill=white, draw=none, shape=rectangle]
\tikzstyle{line}=[-, fill=none]
\tikzstyle{rightarrow}=[->]
\tikzstyle{leftarrow}=[fill=none, <-]
\newcommand{\Bcal}{\mathcal{B}}
\newcommand{\Dcal}{\mathcal{D}}
\newcommand{\Ncal}{\mathcal{N}}
\newcommand{\Pcal}{\mathcal{P}}
\newcommand{\Etl}{\tilde{E}}
\newcommand{\Gtl}{\tilde{G}}
\newcommand{\Vtl}{\tilde{V}}
\newcommand{\phat}{\hat{p}}
\newcommand{\shat}{\hat{s}}
\newcommand{\that}{\hat{t}}
\newcommand{\gtl}{\tilde{g}}
\newcommand{\wtl}{\tilde{w}}
\newcommand{\R}{\mathbb{R}}
\newcommand{\N}{\mathbb{N}}
\newcommand{\Z}{\mathbb{Z}}
\newcommand{\ones}{\bm{1}}
\newcommand{\Ord}{\mathrm{O}}
\newtheorem{theorem}{Theorem}
\newtheorem{lemma}{Lemma}
\newtheorem{proposition}{Proposition}
\newtheorem{assumption}{Assumption}
\theoremstyle{definition}
\newtheorem{definition}{Definition}
\newtheorem{example}{Example}
\DeclareMathOperator*{\minimize}{minimize}
\DeclareMathOperator*{\maximize}{maximize}
\DeclareMathOperator{\subto}{subject\ to}
\DeclareMathOperator*{\argmax}{argmax}
\DeclareMathOperator*{\argmin}{argmin}
\DeclareMathOperator{\conv}{conv}
\DeclareMathOperator{\dom}{dom}
\DeclareMathOperator{\E}{\mathbb{E}}
\DeclarePairedDelimiter\norm{\lVert}{\rVert}
\DeclarePairedDelimiter\floor{\lfloor}{\rfloor}
\DeclarePairedDelimiter\ceil{\lceil}{\rceil}
\DeclarePairedDelimiter\round{\lfloor}{\rceil}
\let\set\relax
\DeclarePairedDelimiter\set{\{}{\}}
\let\Set\relax
\DeclarePairedDelimiterX\Set[2]{\{}{\}}{\mspace{2mu}{#1}\;\delimsize|\;{#2}\mspace{2mu}}
\DeclarePairedDelimiterX\Brc[2]{[}{]}{\mspace{2mu}{#1}\;\delimsize|\;{#2}\mspace{2mu}}
\DeclarePairedDelimiter\prn{(}{)}
\DeclarePairedDelimiterX\Prn[2]{(}{)}{\mspace{2mu}{#1}\;\delimsize|\;{#2}\mspace{2mu}}
\newif\iffigure
\title{Discrete-Convex-Analysis-Based Framework for Warm-Starting Algorithms with Predictions}
\author{%
Shinsaku Sakaue\\
The University of Tokyo\\
Tokyo, Japan\\
\href{mailto:sakaue@mist.i.u-tokyo.ac.jp}{sakaue@mist.i.u-tokyo.ac.jp}
\and
Taihei Oki\\
The University of Tokyo\\
Tokyo, Japan\\
\href{mailto:oki@mist.i.u-tokyo.ac.jp}{oki@mist.i.u-tokyo.ac.jp}
}
\begin{document}

\maketitle

\begin{abstract}
	Augmenting algorithms with learned predictions is a promising approach for going beyond worst-case bounds. Dinitz, Im, Lavastida, Moseley, and Vassilvitskii~(2021) have demonstrated that a warm start with learned dual solutions can improve the time complexity of the Hungarian method for weighted perfect bipartite matching. We extend and improve their framework in a principled manner via \textit{discrete convex analysis} (DCA), a discrete analog of convex analysis. We show the usefulness of our DCA-based framework by applying it to weighted perfect bipartite matching, weighted matroid intersection, and discrete energy minimization for computer vision. Our DCA-based framework yields time complexity bounds that depend on the $\ell_\infty$-distance from a predicted solution to an optimal solution, which has two advantages relative to the previous $\ell_1$-distance-dependent bounds: time complexity bounds are smaller, and learning of predictions is more sample efficient. We also discuss whether to learn primal or dual solutions from the DCA perspective.
\end{abstract}

\newcommand{\Ln}{\texorpdfstring{$\text{L}^\natural$\xspace}{L-natural}}
\newcommand{\Tprj}{{\mbox{${T}_\mathrm{prj}$}}}
\newcommand{\Tloc}{{\mbox{${T}_\mathrm{loc}$}}}

\newcommand{\ind}[1]{\mathds{1}_{#1}}

\newcommand{\pcirc}{p^\circ}
\newcommand{\Mbf}{\mathbf{M}}
\newcommand{\restr}[2]{#1 \mathbin{|} #2}
\newcommand{\contract}[2]{#1 \mathbin{/} #2}
\newcommand{\ycirc}{y^\circ}

\section{Introduction}\label{sec:introduction}
Discrete optimization algorithms are applied to many real-world instances that take place repetitively.
For example, recommendation systems repeat to solve bipartite matching instances to match users with services, and we solve a series of pixel-labeling instances to process images of a movie.
Since such instances arising in the same domain often have some tendencies, using predictions made from past instances to improve algorithms' performance is a natural and promising idea.
A recent line of work \citep{Lykouris2018-wb,Purohit2018-yx,Bamas2020-rj,Lattanzi2020-px,Rohatgi2020-nj,Azar2022-rh,Bansal2022-is} successfully combined online algorithms with predictions and showed that those algorithms perform provably better than known worst-case bounds if predictions are good while enjoying worst-case guarantees even if predictions are poor. See \citep{Mitzenmacher2021-bq} for a survey.

\citet{Dinitz2021-sd} recently initiated the study of improving the time complexity of algorithms with predictions.
They focused on warm-starting the well-known Hungarian method for the weighted perfect bipartite matching problem with predictions on dual solutions, and obtained the time complexity of $\Ord( \min \set*{ m\sqrt{n} \norm{\phat - p^*}_1, mn } )$, where $n$ and $m$ are the number of vertices and edges, respectively, and $\phat \in \R^n$ is a prediction on an optimal dual solution $p^* \in \R^n$.
That is, while the Hungarian method takes $\Ord(mn)$ time in the worse case, it can run faster given a good prediction.
\citet{Dinitz2021-sd} also presented an algorithm for converting infeasible learned dual solutions into initial feasible solutions, and proved an $\Ord(C^2n^3\log n)$ sample complexity bound for learning $\phat$ that approximately minimizes the expected $\ell_1$-error $\E \norm{\phat - p^*}_1$, assuming an optimal prediction is contained in ${[-C, +C]}^n$.
They thus established an end-to-end framework for warm-starting the Hungarian method with predictions.

While \citet{Dinitz2021-sd} has shown that their prediction-based warm-start framework is effective for bipartite matching and $b$-matching, its application to other problems remains to be studied.
Since their idea has the potential to yield strong \textit{beyond-the-worst-case} time complexity bounds, the next question of theoretical interest is: \emph{when and how can we warm-start algorithms with predictions?}

\begin{wrapfigure}[15]{r}[0pt]{.5\textwidth}
	\centering
	\begin{minipage}[b]{.465\linewidth}
			\centering
			\includegraphics[width=.9\linewidth]{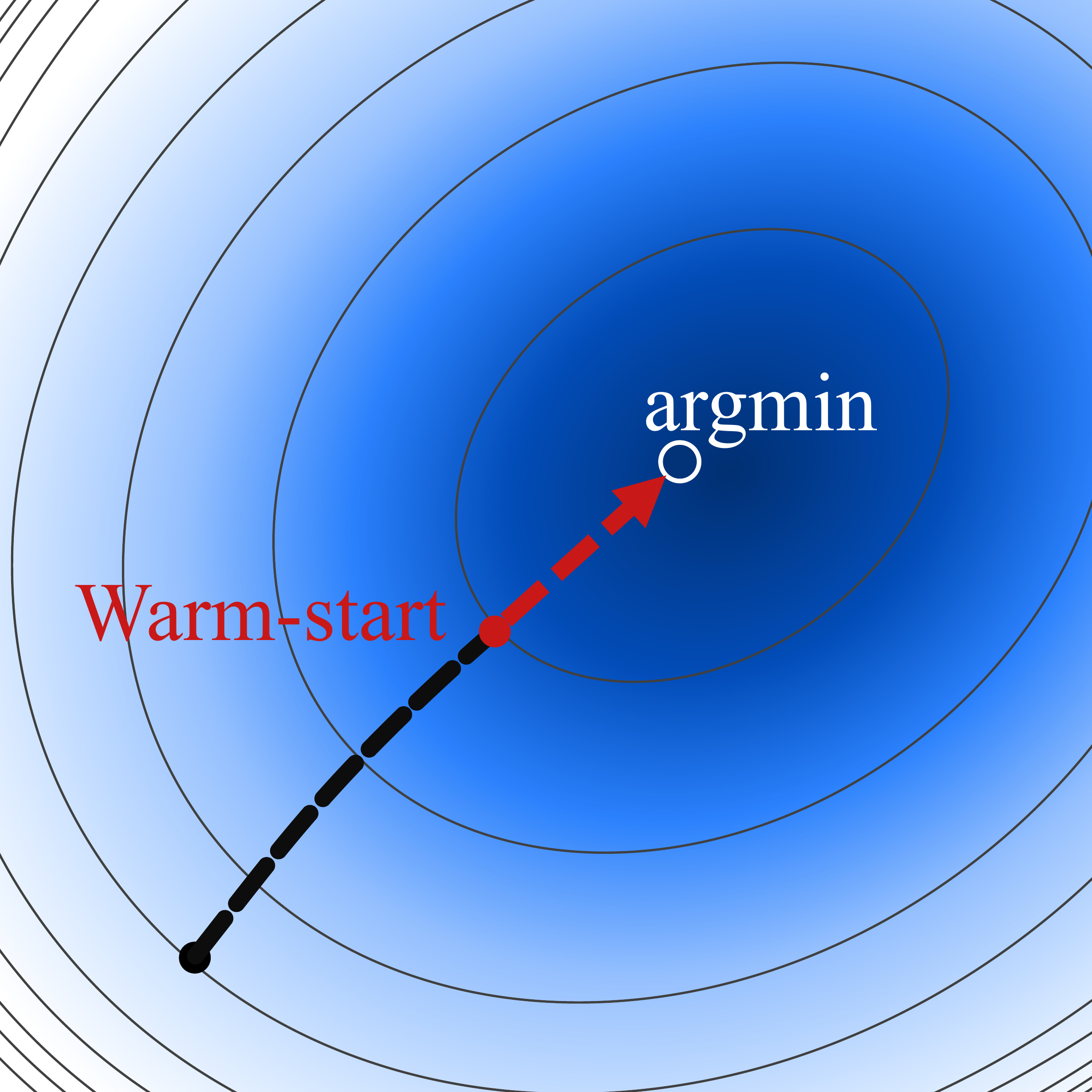}
			\subcaption{Continuous}\label{subfig:continuous}
	\end{minipage}
	\begin{minipage}[b]{.465\linewidth}
			\centering
			\includegraphics[width=.9\linewidth]{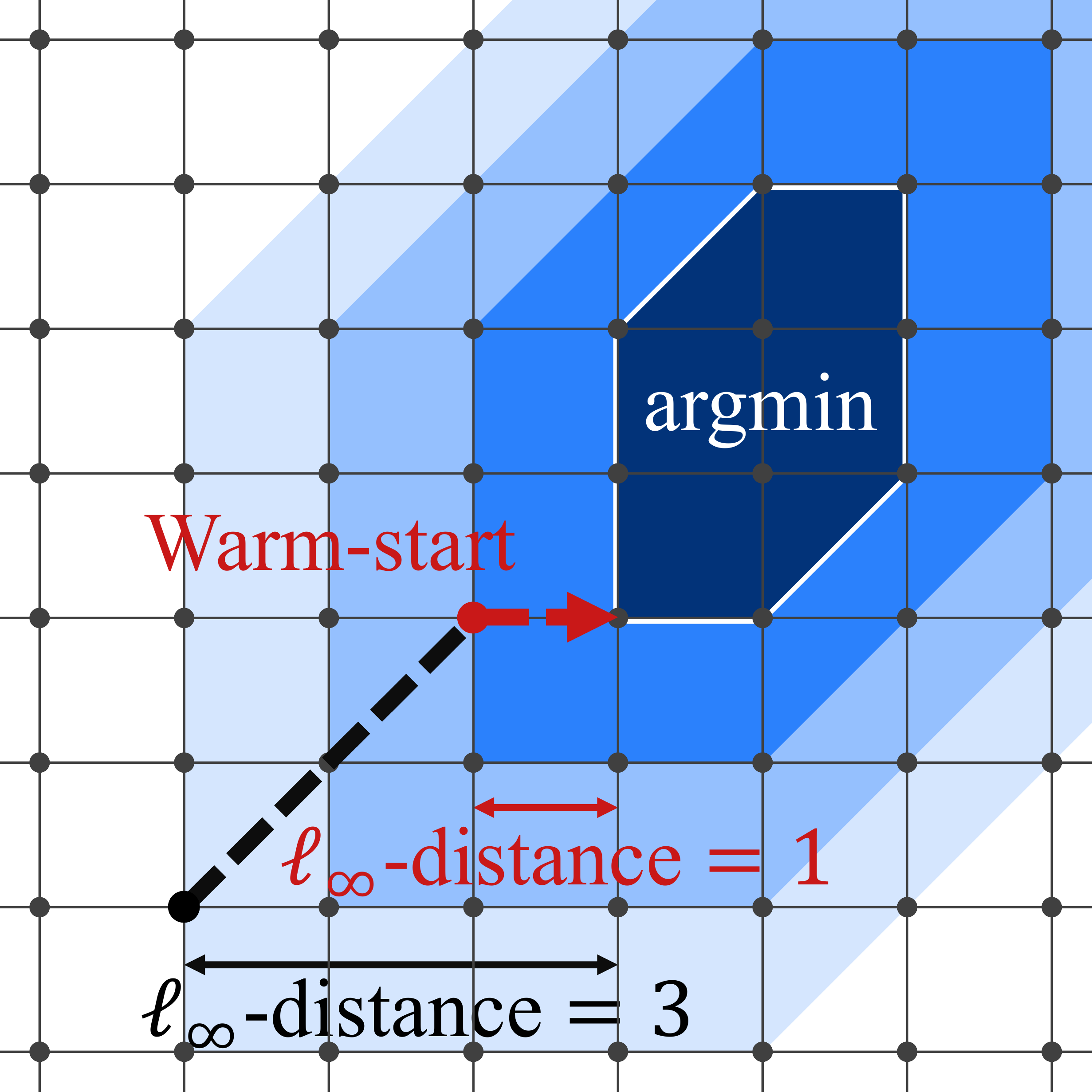}
			\subcaption{Discrete}\label{subfig:discrete}
	\end{minipage}
	\caption{
		Images of warm-starts in (a) continuous and (b) discrete optimization, where darker colors indicate smaller objective values.
	}
  \label{fig:cont-disc}
\end{wrapfigure}

\textbf{Our contribution} is to extend and improve the framework of \citep{Dinitz2021-sd} in a principled manner.
Our idea comes from an intuition that the time complexity bound of \citep{Dinitz2021-sd} seems to be originating from some geometric property of the Hungarian method.
In continuous optimization, warm-starting the gradient descent method reduces its running time, which we can see by simple geometric reasoning (see \cref{subfig:continuous}).
We formalize this idea for discrete optimization via \textit{discrete convex analysis} (DCA) by \citet{Murota2003-bq}, which is a discrete analog of convex analysis~\citep{Rockafellar1970-sk} and offers a gradient-descent-like interpretation of various discrete optimization algorithms, as in \cref{subfig:discrete}.
Based on DCA, we show that warm-starting with predictions is effective for a large class of problems called \textit{L/\Ln-convex minimization}.

As with the framework of \citep{Dinitz2021-sd}, given a prediction $\phat$ on an optimal solution $p^*$, we convert $\phat$ into an initial feasible solution $\pcirc$.
Then, starting from $\pcirc$, we iteratively solve a local optimization problem to find a direction, along which we proceed.
We will see that the number of iterations is $\Ord(\norm{p^* - \phat}_\infty)$ (see \cref{subfig:discrete} for an image).
Thus, if a local optimization solver runs in $\Tloc$ time, the total time complexity is $\Ord(\Tloc\norm{p^* - \phat}_\infty)$ plus the time of converting $\phat$ into $\pcirc$, which is often shorter than $\Tloc$.
\Cref{table:results} summarizes the results obtained by applying our DCA-based framework to specific problems, where $\Tloc$ is replaced with the running time of standard local optimization solvers.
As for bipartite matching, our bound is up to $n$ times smaller than the $\Ord(m\sqrt{n}\norm{\phat - p^*}_1)$ bound of \citep{Dinitz2021-sd}.\footnote{Unlike \citep{Dinitz2021-sd}, our framework cannot yield worst-case bounds in general, which is a limitation of our work. This, however, does not matter in practice if we can run standard algorithms in parallel, as discussed in \cref{section:conclusion}.}

\begin{table}[tb]
	\caption{Our results for weighted perfect bipartite matching (BM) on bipartite graphs with $n$ vertices and at most $m$ edges, weighted matroid intersection (MI) on pairs of rank-$r$ matroids on an identical  ground set of size $n$ ($\tau$ is the running time of independence oracles), and discrete energy minimization on graphs with $n$ vertices and at most $m$ edges.}\label{table:results}
	\vspace{10pt}
	\centering
	\begin{tabular}{ccccc} \toprule
	Problem & Local optimization problem & Time complexity & Prediction \\ \midrule
	Weighted perfect BM & Maximum cardinality BM & $\Ord(m\sqrt{n}\norm{p^* - \phat}_\infty)$ & Dual \\
	Weighted MI & Maximum cardinality MI & $\Ord(\tau n r^{1.5}\norm{p^* - \phat}_\infty)$ & Dual \\
	Discrete energy min. & Minimum cut & $\Ord(mn^2\norm{p^* - \phat}_\infty)$ & Primal \\
	\bottomrule
	\end{tabular}
\end{table}

We then provide an $\Ord(C^2n)$ sample complexity bound for learning predictions that approximately minimize the expected $\ell_\infty$-error, $\E \norm{p^* - \phat}_\infty$, assuming an optimal prediction to be in ${[-C, +C]}^n$.
Our bound is better than the previous $\Ord(C^2n^3\log n)$ bound of~\citep{Dinitz2021-sd} for approximately minimizing the expected $\ell_1$-error.
Our method for learning predictions is based on a recent online-learning framework by \citet{Khodak2022-sf}, and we also obtain an $\Ord(C\sqrt{nT})$ regret bound for the online setting.

We finally discuss whether to learn primal or dual solutions, which depends on problems as in \cref{table:results} and has been decided in a somewhat ad-hoc manner in literature.
We provide a guideline for choosing primal or dual from the DCA perspective by considering \textit{path connectivity} of feasible regions.

\subsection{Related work}
\paragraph{Theoretically fast algorithms.}
We overview existing theoretically fast algorithms.
We emphasize that, as with \citep{Dinitz2021-sd}, our motivation is to accelerate simple algorithms, not to develop theoretically fast algorithms, which are often difficult to implement and empirically slow due to hidden large constants.
For maximum cardinality bipartite matching, the standard Hopcroft--Karp algorithm \citep{Hopcroft1973-kl} runs in $\Ord(m\sqrt{n})$ time, and a recent fast algorithm \citep{Van_den_Brand2020-df} takes $\Ord(m + n^{1.5})$ time (up to logarithmic factors).
For weighted bipartite matching with non-negative integer weights bounded by $W$, a scaling-type algorithm~\citep{Duan2012-ac} runs in $\Ord(m\sqrt{n}\log(W))$ time.
Moreover, a recent almost-linear-time max/min-cost flow algorithm \citep{Chen2022-yt} implies an $\Ord(m^{1+\mathrm{o}(1)}\log^2 W)$-time algorithm for weighted bipartite matching.
For maximum cardinality matroid intersection, a standard algorithm is Cunningham's $\Ord(\tau nr^{1.5})$-time algorithm \citep{Cunningham1986-wi}, and a recent faster algorithm \citep{Chakrabarty2019-jz} runs in $\Ord(\tau nr\log r)$ time.
For weighted matroid intersection, there is an $\Ord(\tau nr^2)$-time algorithm \citep{Brezovec1986-hy}.
Moreover, if the maximum weight is bounded by $W$, there are $\Ord(\tau n^2 \log(nW))$-time \citep{Lee2015-ia} and $\Ord(\tau n r^{1.5}W)$-time \citep{Huang2018-ac} algorithms.
For discrete energy minimization, there is an $\Ord(mn \log (n^2/m)\log (nW))$-time algorithm~\citep{Ahuja2003-qz}, where $W$ is the number of possible vertex labels.
For a particular case where smoothness of vertex labels is measured by linear deviation functions, there is an algorithm that takes almost the same time to solve a min-cut instance \citep{Hochbaum2001-to}; combined with the algorithm of \citep{Chen2022-yt}, it runs in $\Ord(m^{1 + \textrm{o}(1)}\log^2 W)$ time.

\paragraph{Algorithms with predictions.}
Many recent studies \citep{Lykouris2018-wb,Purohit2018-yx,Bamas2020-rj,Lattanzi2020-px,Rohatgi2020-nj,Azar2022-rh,Bansal2022-is} improved competitive ratios of online algorithms with predictions.
\citet{Dinitz2021-sd} proposed to warm-start algorithms with predictions.
\citet{Khodak2022-sf} developed an online-learning framework for learning predictions, and applied it to the $\ell_1$-error minimization setting of \citep{Dinitz2021-sd} to obtain $\Ord(Cn\sqrt{nT})$ regret and $\Ord(C^2n^3)$ sample complexity bounds.
By contrast, we learn predictions to minimize the $\ell_\infty$-error, yielding better guarantees.
\textit{Data-driven algorithm design}~\citep{Balcan2021-fy} is closely related to algorithms with predictions.
As discussed in \citep{Khodak2022-sf}, one distinction is that the former aims to tune algorithm parameters to optimize the expected performance, while the latter focuses on prediction-dependent bounds to quantify the improvement gained by using predictions.
A very recent study \citep{Chen2022-li} also extended and improved \citep{Dinitz2021-sd}, which should be considered as independent of each other and employs a different approach from ours.

\section{Preliminaries}\label{section:preliminaries}
Let $\round*{\cdot}$ denote the rounding to the nearest integer ($0.5$ is rounded down).
We apply $\floor{\cdot}$, $\ceil{\cdot}$, and $\round{\cdot}$ to vectors in an element-wise manner.
We use $V = \set*{1,2,\dots,n}$ to denote a finite ground set of size $n \in \N$.
For any $X \subseteq V$, $\ones_X \in \set*{0,1}^V$ denotes an indicator vector whose entries corresponding to $X$ are one and the others are zero; let $\ones = \ones_V$.
Let $\vee$ and $\wedge$ denote the element-wise maximum and minimum operators, respectively.
For any $S\subseteq \Z^V$, let $\conv(S) \subseteq \R^V$ denote the convex hull of $S$.

Given any function $g:\Z^V \to \R\cup\set{+\infty}$, let $\dom g = \Set*{p \in \Z^V}{g(p) < +\infty}$ be its \textit{effective domain}, which indicates the feasible region of a minimization problem of form $\min_{p \in \Z^V} g(p)$.
We say $g$ is \textit{proper} if $\dom g \neq \emptyset$.
In this paper, we assume the following basic conditions to hold.

\begin{assumption}\label{assumption:tie-break}
	We assume that any objective function $g:\Z^V \to \R\cup\set{+\infty}$ is proper and has at least one minimizer.
	Moreover, given any $g$, we uniquely associate $p^*(g) \in \argmin_{p \in \Z^V} g(p)$ with $g$ by breaking ties with an arbitrary predefined rule (to deal with the case of multiple minimizers).
\end{assumption}

\subsection{Background on discrete convex analysis}
We overview the basics of discrete convex analysis. We refer the reader to \citep{Murota2003-bq} for more information.

Considering functions on $\Z^V$, how to define convexity is already nontrivial.
A well-known property of a continuous convex function $f$ is midpoint convexity, i.e., $\frac{f(x) + f(y)}{2} \ge f\prn*{\frac{x + y}{2}}$, and its natural discrete analog for a function $g:\Z^V \to \R\cup\set{+\infty}$ would be $g(p) + g(q) \ge g\prn*{\ceil*{\frac{p+q}{2}}} + g\prn*{\floor*{\frac{p+q}{2}}}$ for all $p, q \in \Z^V$.
This condition is indeed equivalent to the the following \textit{\Ln-convexity} of $g$.
\begin{definition}\label{definition:l-convex}
	A proper function $g:\Z^V \to \R\cup\set{+\infty}$ is \textit{L-convex} if it has the following properties.
	\begin{description}
		\item[Submodularity:] $g(p) + g(q) \ge g(p \vee q) + g(p \wedge q)$ for all $p, q \in \Z^V$.
		\item[Linearity in the direction of $\ones$:] there exists $r \in \R$ such that $g(p + \ones) = g(q) + r$ for all $p \in \Z^V$.
	\end{description}
	A proper function $g: \Z^V \to \R\cup\set{\infty}$ is \textit{\Ln-convex} if a function $\gtl: \Z \times \Z^{V}\to \R\cup\set{+\infty}$ defined by $\gtl(p_0, p) = g(p - p_0\ones)$ for all $p_0\in \Z$ and $p\in \Z^V$ is L-convex.
\end{definition}

While \Ln-convex functions form a wider class than L-convex functions, the two classes are essentially equivalent due to the one-to-one correspondence between \Ln-convex functions on $\Z^V$ and L-convex functions on $\Z \times \Z^{V}$.
Thus, we can choose whichever is more convenient for modeling problems.
Note that the sum of two L/\Ln-convex functions, $g_1$ and $g_2$, is L/\Ln-convex if it is proper.

We say a non-empty set $S \subseteq \Z^V$ is L/\Ln-convex if its indicator function $\delta_S$ is L/\Ln-convex, where $\delta_S(p) = 0$ if $p \in S$ and $+\infty$ otherwise.
If $g:\Z^V \to \R\cup\set*{+\infty}$ is L/\Ln-convex, then $\dom g \subseteq \Z^V$ is an L/\Ln-convex set.
The next proposition provides useful representations of L/\Ln-convex sets.
\begin{proposition}\label{proposition:l-convex-set}
	A non-empty set $S \subseteq \Z^V$ is \Ln-convex if and only if there exist
	$\alpha_{i} \in \Z\cup\set*{-\infty}$,
	$\beta_{i} \in \Z\cup\set*{+\infty}$, and
	$\gamma_{ij} \in \Z\cup\set*{+\infty}$ ($i, j \in V; i \neq j$) such that
	\[
		S = \Set*{p \in \Z^V}{ \alpha_i \le p_i \le \beta_i, \;\; p_j - p_i \le \gamma_{ij} \;\; \text{for $i, j \in V; i \neq j$}}
		,
	\]
  and is L-convex if and only if $S$ is written as above without the box constraints $\alpha_i \le p_i \le \beta_i$ ($i \in V$).
\end{proposition}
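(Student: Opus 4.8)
The plan is to unwind the set definitions, reduce the \Ln-convex case to the L-convex case, and then prove the L-convex characterization, where the only genuinely technical point is a single local-exchange lemma. First, since $\delta_S$ takes values in $\{0,+\infty\}$, submodularity of $\delta_S$ is equivalent to $S$ being a sublattice ($p,q\in S\Rightarrow p\vee q,\ p\wedge q\in S$), and linearity along $\ones$ forces the constant to be $0$, i.e.\ $S+\ones=S$; thus an L-convex set is exactly a nonempty $\vee,\wedge$-closed subset of $\Z^V$ invariant under translation by $\ones$. For the \Ln-case I would invoke the correspondence in \cref{definition:l-convex}: $S$ is \Ln-convex iff $\Stl:=\{(x_0,x)\in\Z\times\Z^V : x-x_0\ones\in S\}$ is L-convex on the ground set $\{0\}\cup V$. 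Granting the L-characterization for $\Stl$ (pure difference constraints $y_b-y_a\le\gamma_{ab}$ over $\{0\}\cup V$) and substituting $x_0=0$, which is legitimate because $p\in S\iff(0,p)\in\Stl$, the constraints with $a,b\in V$ survive verbatim, while those involving the extra coordinate become $p_i\le\gamma_{0i}=:\beta_i$ and $-p_i\le\gamma_{i0}$, i.e.\ $\alpha_i:=-\gamma_{i0}\le p_i$. Conversely, a box-plus-difference set lifts to a pure difference set for $\Stl$. Hence it suffices to prove the L-convex case.

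The easy direction is a short calculation: any nonempty $\{p:p_j-p_i\le\gamma_{ij}\}$ is $\ones$-invariant (differences are unchanged under $+\ones$) and a sublattice, via $\max(p_j,q_j)-\max(p_i,q_i)\le\max(p_j-p_i,q_j-q_i)$ for $\vee$ and the analogous bound for $\wedge$. For the hard direction, set $\gamma_{ij}:=\sup_{q\in S}(q_j-q_i)\in\Z\cup\{+\infty\}$; finite suprema are attained because the values are integers, and $S\subseteq P:=\{p:p_j-p_i\le\gamma_{ij}\}$ by definition. To prove $P\subseteq S$, take $p\in P$ and let $q^*$ be the minimal element of $\{q\in S:q\ge p\}$; this set is nonempty (translate any point of $S$ upward by a multiple of $\ones$), closed under $\wedge$, and integer-bounded below, so its componentwise infimum lies in $S$. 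Put $A:=\{k:q^*_k>p_k\}$. If $A=\emptyset$ then $q^*=p\in S$; otherwise $A\neq V$ (else $q^*-\ones\ge p$ would contradict minimality), and minimality forces $q^*-\ones_A\notin S$. The crux is the local lemma: \emph{for L-convex $S$, $q\in S$ and $\emptyset\neq A\subsetneq V$, if $q-\ones_A\notin S$ then some cross constraint is tight, i.e.\ there are $i\in A$, $j\notin A$ with $q_j-q_i=\gamma_{ij}$.} Applying it to $q^*$ gives such $i,j$, whence $p_j-p_i=q^*_j-p_i\ge q^*_j-(q^*_i-1)=\gamma_{ij}+1>\gamma_{ij}$, contradicting $p\in P$; so $A=\emptyset$ and $p\in S$.

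The main obstacle is this local lemma, which I would attack through the family $D^-(q):=\{A\subseteq V:q-\ones_A\in S\}$ of feasible unit-down moves. It is a ring of sets: the identities $(q-\ones_A)\vee(q-\ones_B)=q-\ones_{A\cap B}$ and $(q-\ones_A)\wedge(q-\ones_B)=q-\ones_{A\cup B}$, together with the sublattice property, give closure under $\cap$ and $\cup$, and $\emptyset,V\in D^-(q)$ (the latter by $\ones$-invariance). One inclusion is immediate from $S\subseteq P$: if $q-\ones_A\in S$ then $q-\ones_A\in P$, so for $i\in A$, $j\notin A$ we get $q_j-q_i\le\gamma_{ij}-1<\gamma_{ij}$, i.e.\ no cross constraint is tight. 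The lemma is the contrapositive of the reverse inclusion — that a cut with no tight cross constraint admits the downward move — and this is where submodularity does the real work: I would realize $q-\ones_A$ as a join of meets of $q$ with translated witnesses $s^{ij}\in S$ attaining the slack $q_j-q_i<\gamma_{ij}$, the delicate point being to combine the witnesses so that every coordinate in $A$ is lowered by exactly one while the coordinates outside $A$ are preserved. I expect this witness-combination step to be the only genuinely technical part of the argument.
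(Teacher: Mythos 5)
The paper never proves \cref{proposition:l-convex-set}: it is imported as known discrete-convex-analysis background, with the surrounding text pointing to \citep[Section~5.5]{Murota2003-bq}, so there is no in-paper argument to compare yours against, and any correct proof is necessarily a ``different route.'' Your skeleton is sound. Unwinding \cref{definition:l-convex} for $\delta_S$ into ``$S$ is a nonempty sublattice with $S+\ones=S$'' is right (taking $p\in S$ forces the constant $r$ to be $0$); the lift $\Stl=\Set*{(p_0,p)}{p-p_0\ones\in S}$ correctly reduces the \Ln-convex case to the L-convex case, with the box constraints appearing as difference constraints against the extra coordinate; the easy direction via $\max\set*{p_j,q_j}-\max\set*{p_i,q_i}\le\max\set*{p_j-p_i,\,q_j-q_i}$ (and its min analogue) is correct; and in the hard direction, the minimal element $q^*$ of $\Set*{q\in S}{q\ge p}$ indeed lies in $S$ (each coordinatewise infimum is attained by integrality and the lower bound $p$, and a finite meet of attaining elements realizes all of them simultaneously), after which tightness of a cross constraint contradicts $p\in P$ exactly as you argue.

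The one place you stop short --- the local exchange lemma --- is a genuine hole as written, but the lemma is true and closes precisely by the witness combination you sketch; here is the explicit construction. For $i\in A$, $j\notin A$ with $q_j-q_i<\gamma_{ij}$, pick $s^{ij}\in S$ with $s^{ij}_j-s^{ij}_i\ge q_j-q_i+1$ and translate it: $\tilde s^{ij}=s^{ij}+(q_i-1-s^{ij}_i)\ones\in S$ satisfies $\tilde s^{ij}_i=q_i-1$ and $\tilde s^{ij}_j\ge q_j$. Then $t^{ij}=q\wedge\tilde s^{ij}\in S$ has $t^{ij}\le q$, $t^{ij}_i=q_i-1$, and $t^{ij}_j=q_j$. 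Set $w=\bigvee_{j\notin A}\bigwedge_{i\in A}t^{ij}\in S$ (both index sets are nonempty since $\emptyset\neq A\subsetneq V$); then $w\le q$, $w_i\le q_i-1$ for $i\in A$ (each inner meet is at most $t^{ij}_i=q_i-1$ in coordinate $i$), and $w_j=q_j$ for $j\notin A$ (the inner meet indexed by $j$ keeps coordinate $j$ at $q_j$, and the outer join cannot exceed $q_j$). This lowers the $A$-coordinates by \emph{at least} one rather than exactly one --- the delicate point you flagged --- and the missing trick is a final join with $q-\ones\in S$, available by $\ones$-invariance: coordinatewise, $w\vee(q-\ones)$ equals $\max\set*{w_i,q_i-1}=q_i-1$ on $A$ and $\max\set*{q_j,q_j-1}=q_j$ off $A$, so $w\vee(q-\ones)=q-\ones_A\in S$. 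That is exactly the contrapositive of your lemma, and with this step inserted your proof is complete.
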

Therefore, the colored area in \cref{subfig:discrete} illustrates an example of \Ln-convex sets in $\Z^2$.
We can also represent $\conv(S) \subseteq \R^V$ as above by replacing $\Z^V$ with $\R^V$ (see \citep[Section 5.5]{Murota2003-bq}).
L/\Ln-convex sets also have the following useful property, which we prove in \cref{app-section:rounding-proof}.
\begin{restatable}{lemma}{rounding}\label{lemma:rounding}
	Let $S \subseteq \Z^V$ be an L/\Ln-convex set and $p \in \conv(S)$. Then, it holds $\round*{p} \in S$.
\end{restatable}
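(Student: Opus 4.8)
The plan is to read off the inequality description of $S$ from \cref{proposition:l-convex-set} and to exploit the accompanying fact that $\conv(S)$ is cut out by the \emph{same} inequalities, now over $\R^V$. Writing $S$ via integers $\alpha_i \in \Z \cup \{-\infty\}$, $\beta_i \in \Z \cup \{+\infty\}$, and $\gamma_{ij} \in \Z \cup \{+\infty\}$, membership $p \in \conv(S)$ is equivalent to $\alpha_i \le p_i \le \beta_i$ for all $i$ together with $p_j - p_i \le \gamma_{ij}$ for all $i \neq j$. Since $\round{p} \in \Z^V$ automatically, it then suffices to verify that $\round{p}$ satisfies this very system, which would place it in $S$. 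The \Ln\ case subsumes the L-convex case, which corresponds to dropping the box constraints, so a single argument covers both.

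First I would dispose of the box constraints, which is routine: the map $\round{\cdot}$ is monotone nondecreasing and fixes every integer, so $\alpha_i \le p_i$ yields $\alpha_i = \round{\alpha_i} \le \round{p_i}$, and symmetrically $\round{p_i} \le \beta_i$ (the cases $\alpha_i = -\infty$ or $\beta_i = +\infty$ being vacuous). The crux is the difference constraints, where independent coordinate-wise rounding is genuinely dangerous: if $p_i$ rounds down while $p_j$ rounds up, the gap $\round{p_j} - \round{p_i}$ can strictly exceed $p_j - p_i$. The quantitative fact that saves us is that each coordinate moves by strictly less than $\tfrac12$ upward under rounding; precisely, for the stated tie-breaking rule one has $\round{x} = \lceil x - \tfrac12 \rceil$, hence $p_k - \tfrac12 \le \round{p_k} < p_k + \tfrac12$. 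Applying the strict upper bound at $j$ and the lower bound at $i$ gives $\round{p_j} - \round{p_i} < (p_j - p_i) + 1 \le \gamma_{ij} + 1$ whenever $\gamma_{ij}$ is finite; as both $\round{p_j} - \round{p_i}$ and $\gamma_{ij}$ are integers, the strict inequality forces $\round{p_j} - \round{p_i} \le \gamma_{ij}$, which is exactly what is required (and $\gamma_{ij} = +\infty$ is again vacuous).

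The only real obstacle is getting the \emph{strict} side of the rounding bound correct, since the argument collapses if one settles for $\round{p_j} - \round{p_i} \le (p_j - p_i) + 1$: integrality would then merely yield $\le \gamma_{ij} + 1$, which is too weak. I would therefore verify carefully, under the convention that $0.5$ rounds down, that the upper bound $\round{x} < x + \tfrac12$ is strict while the lower bound $\round{x} \ge x - \tfrac12$ may be attained. This asymmetry is precisely what drives the combined difference strictly below $\gamma_{ij} + 1$ and lets integrality finish the proof.
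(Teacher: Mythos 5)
Your proof is correct and takes essentially the same route as the paper's: both pass to the inequality description of $S$ from \cref{proposition:l-convex-set} (with $\conv(S)$ cut out by the same system over $\R^V$), dispose of the box constraints via integrality and monotonicity of rounding, and reduce everything to preserving the difference constraints $p_j - p_i \le \gamma_{ij}$. The only divergence is the finishing step: the paper proves $\round*{p_j} - \round*{p_i} \le \ceil*{p_j - p_i}$ by case analysis on the fractional parts of $p_i$ and $p_j$, whereas you combine the one-sided bounds $x - \tfrac{1}{2} \le \round*{x} < x + \tfrac{1}{2}$ (the strict side being exactly what the round-half-down convention guarantees) with integrality of $\round*{p_j} - \round*{p_i}$ and $\gamma_{ij}$ --- an equally valid and slightly more streamlined way to land the same conclusion.
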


\subsection{Steepest descent method for L/\Ln-convex function minimization}\label{subsection:preliminaries-steepest-descent}
An essential property of continuous convex functions is that the local optimality implies the global optimality, which enables the gradient descent method to reach a global optimum.
L- and \Ln-convexity inherits this property with respect to neighborhood $\Ncal_+ = \set*{0, +1}^V$ and $\Ncal_\pm = \set*{0, -1}^V
\cup \set*{0, +1}^V$, respectively, i.e., for any L-convex (\Ln-convex) function $g:\Z^V \to \R\cup\set*{+\infty}$ and $p \in \Z^V$, it holds
\begin{align}
  p \in \argmin\Set*{g(q)}{{q \in \Z^V}}
  & &
  \Longleftrightarrow
  & &
  g(p) \le g(p+d) \quad \text{for every $d \in \Ncal_+$ ($d \in \Ncal_\pm$).}
\end{align}
This fact underpins the convergence of a steepest descent method (\cref{alg:steepest-descent}) to a global minimum of any L/\Ln-convex function $g:\Z^V \to \R\cup\set*{+\infty}$, as stated in \cref{proposition:convergence-iterations} presented below.

\begin{algorithm}[tb]
	\caption{Steepest descent method for L-convex (\Ln-convex) function minimization}\label{alg:steepest-descent}
	\begin{algorithmic}[1]
		\State $p \gets \pcirc \in \dom g$ {\label[step]{step:init-sol}}
		\Comment{$\pcirc$ is an initial feasible solution.}
		\While{not converged}
		\State $d \gets \argmin \Set*{g(p + d')}{d' \in \Ncal_+ } $ {\label[step]{step:local-opt}}
		\Comment{Replace $\Ncal_+$ with $\Ncal_\pm$ if $g$ is \Ln-convex.}
		\If{$g(p + d) = g(p)$}
		\State \Return $p$
		\EndIf
		\State $\lambda \gets \sup\Set*{\lambda'\in\Z_{>0}}{ g(p + \lambda' d) - g(p) = \lambda' (g(p+d) - g(p))}$
		\Comment{Alternatively, $\lambda \gets 1$.}
		{\label[step]{step:step-length}}
		\State $p \gets p + \lambda d$
		\EndWhile
	\end{algorithmic}
\end{algorithm}

\Cref{alg:steepest-descent} is very simple:
starting from an initial point $\pcirc \in \dom g$,
it finds a steepest direction by solving a local optimization problem (\cref{step:local-opt}), sets the step length $\lambda$, and updates the solution (\Cref{alg:steepest-descent} is a \textit{long-step} version \citep{Fujishige2015-qo,Shioura2017-gf}, and we can also let $\lambda = 1$ in \cref{step:step-length}).
The algorithm terminates if \cref{step:local-opt} does not improve the objective value.
The local optimization problem in \cref{step:local-opt} can be written as $\min_{X \subseteq V} f(X) = g(p + \ones_X)$ (or $g(p \pm \ones_X)$ if $g$ is \Ln-convex).
From L/\Ln-convexity of $g$, one can confirm that $f:2^V \to \R\cup\set*{+\infty}$ is a submodular function, which can be minimized in polynomial time in general, and more efficient local optimization solvers are available for many specific problems.
\cref{alg:steepest-descent} thus provides an efficient way to minimize L/\Ln-convex functions.

The number of iterations of \cref{alg:steepest-descent} is known to be bounded by the distance between an initial point and a global optimal solution.
To describe this claim precisely, for any $p \in \R^V$, we define
\begin{align}
	\norm{p}_\infty^\pm = \norm{p}_\infty^+ + \norm{p}_\infty^-
	\;\;
	\text{where}
	\;\;
	\norm{p}_\infty^+ = \max_{i\in V}\max\set*{0,+p_i}
	\;\;
	\text{and}
	\;\;
	\norm{p}_\infty^- = \max_{i\in V}\max\set*{0,-p_i}
	.
\end{align}
Note that $\norm{p}_\infty \le \norm{p}_\infty^\pm \le 2 \norm{p}_\infty$ holds, i.e., $\norm{p}_\infty^\pm = \Theta({\norm{p}_\infty})$.
Moreover, $\norm{\cdot}_\infty^\pm$ satisfies the axioms of norms, and hence we sometimes refer to it as the $\ell_\infty^\pm$-distance.

\begin{proposition}[{\citep[Theorem 1.2]{Murota2014-bv} and \citep[Theorem 6.2]{Fujishige2015-qo}}]\label{proposition:convergence-iterations}
	\Cref{alg:steepest-descent} returns a global optimal solution to $\min_{p \in \Z^V} g(p)$ in at most $\norm{p^*(g) - \pcirc}_\infty^\pm + 1 = \Ord(\norm{p^*(g) - \pcirc}_\infty)$ iterations.
\end{proposition}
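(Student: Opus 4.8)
The plan is to separate the claim into two parts—\emph{correctness} (the returned point is a global minimizer) and the \emph{iteration bound}—and to reduce everything to the L-convex case. By \cref{definition:l-convex}, an \Ln-convex $g$ lifts to an L-convex $\gtl(p_0,p) = g(p - p_0\ones)$ on $\Z\times\Z^V$, under which a move in $\Ncal_\pm$ for $g$ corresponds to a move in $\Ncal_+$ for $\gtl$ while the $\ell_\infty^\pm$-distance is preserved; so it suffices to analyze \cref{alg:steepest-descent} for L-convex functions with neighborhood $\Ncal_+$. Correctness is then immediate: the algorithm returns only when $g(p+d) = g(p)$ for the minimizing $d \in \Ncal_+$, i.e.\ $g(p) \le g(p + d')$ for all $d' \in \Ncal_+$, which by the local-to-global optimality equivalence stated above the algorithm means $p \in \argmin_{q} g(q)$.

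For the iteration bound, I would fix a minimizer $p^*$ attaining $\min_{q}\norm{q - \pcirc}_\infty^\pm$ over all minimizers $q$ and track the potential $\Phi(p) = \norm{p^* - p}_\infty^\pm$; since this minimum is at most $\norm{p^*(g) - \pcirc}_\infty^\pm$, any bound phrased in $\Phi(\pcirc)$ immediately yields the stated one. The heart of the argument is a \textbf{descent/proximity lemma}: at any non-terminal iterate $p$, the steepest-descent step decreases $\Phi$ by at least one. To set this up I would use the staircase decomposition of $p^* - p$ into nested level sets: writing $X_k = \Set*{i \in V}{p^*_i - p_i \ge k}$ we get $X_1 \supseteq X_2 \supseteq \cdots$ and $p \vee p^* = p + \sum_{k \ge 1}\ones_{X_k}$, with a symmetric decomposition for the coordinates where $p_i > p^*_i$. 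Submodularity of $g$ applied to $p$ and $p^*$, together with linearity in the direction $\ones$, shows that the layers toward $p^*$ are descent directions and that the local objective $X \mapsto g(p + \ones_X)$ is minimized on a sublattice whose minimal element is compatible with these layers. Because every coordinate of positive upper gap lies in the outermost layer $X_1$, advancing $X_1$ lowers the extreme gap $\max_i(p^*_i - p_i)$, and hence $\Phi$, by one.

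The main obstacle is making the previous paragraph precise for the \emph{steepest} (minimal-minimizer) direction together with the \emph{long step}. Two points need care: (i) the direction chosen in \cref{step:local-opt} need not equal the outermost level set $X_1$, so one must show that the minimal minimizer of the local submodular problem is nonetheless aligned with the staircase—a subset of $X_1$ on the increasing side and of the symmetric set on the decreasing side—so that advancing it still reduces the extreme gap; and (ii) the long step $\lambda$ of \cref{step:step-length} must be shown not to overshoot $p^*$ along that direction, so that the change in $\Phi$ is a decrease by at least one rather than an increase. Both are exactly the content of the cited results \citep[Theorem 1.2]{Murota2014-bv} and \citep[Theorem 6.2]{Fujishige2015-qo}, and rely on the L-convex exchange inequality plus convexity of $g$ along the line $\set*{p + \lambda d : \lambda \in \Z}$, which guarantees a well-defined maximal linear step. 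Granting the lemma, the count follows: $\Phi$ is a nonnegative integer that starts at $\Phi(\pcirc)$ and drops by at least one at each non-terminal iteration, so there are at most $\Phi(\pcirc)$ such iterations, plus the single terminal iteration that detects optimality and returns—at most $\norm{p^*(g) - \pcirc}_\infty^\pm + 1 = \Ord(\norm{p^*(g)-\pcirc}_\infty)$ in total, where the final equality uses $\norm{\cdot}_\infty^\pm = \Theta(\norm{\cdot}_\infty)$.
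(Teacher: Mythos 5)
Your outer shell --- lifting \Ln{} to L, getting correctness from local-to-global optimality, and relaxing the nearest-optimum distance $\min\Set*{\norm{q - \pcirc}_\infty^\pm}{q \in \argmin g}$ to $\norm{p^*(g) - \pcirc}_\infty^\pm$ for the tie-broken $p^*(g)$ of \cref{assumption:tie-break} --- is exactly how the paper handles this proposition; the paper gives no independent proof and simply cites the known theorems, which bound the iteration count by that minimum plus one. The gap is in the core of your argument: the descent lemma for a \emph{fixed} target is false. Take $g(p_1,p_2) = \max\set{0,\, -p_1,\, p_1 - 1} + \abs{p_2}$ (separable convex, hence \Ln-convex) with $\argmin g = \set{(0,0),(1,0)}$ and $\pcirc = (0,-1)$. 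Both minimizers are at $\ell_\infty^\pm$-distance $1$ from $\pcirc$, so your prescription allows fixing $p^* = (0,0)$; yet $d = (1,1)$ is a legitimate choice in \cref{step:local-opt} (it attains the locally minimal value $0$, tied with $(0,1)$), and the resulting iterate $(1,0)$ still has $\Phi = 1$: no decrease at a non-terminal iterate. The failure is not confined to the last step: with $\argmin g = \Set*{(k,0)}{0 \le k \le K}$ and $\pcirc = (0,-K)$, every optimum is at distance exactly $K$, the unit steps $d = (1,1)$ are steepest throughout the trajectory $(0,-K) \to (1,1-K) \to \dots \to (K,0)$, and $\Phi$ relative to the fixed $p^* = (0,0)$ equals $K$ at every iterate, so your potential argument yields no bound at all. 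Nor does your ``minimal-minimizer'' restriction repair this: besides not matching \cref{alg:steepest-descent}, which permits arbitrary tie-breaking in \cref{step:local-opt} (and \cref{proposition:convergence-iterations} is claimed for any such choice and any $\lambda$), the first example with the fixed target $p^* = (1,0)$ defeats the minimal-minimizer rule as well.

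Relatedly, your claim that points (i) and (ii) ``are exactly the content of the cited results'' mischaracterizes \citep[Theorem 1.2]{Murota2014-bv} and \citep[Theorem 6.2]{Fujishige2015-qo}. What those results establish is a descent property for the distance to the optimal \emph{set}: writing $\mu(p) = \min\Set*{\norm{q - p}_\infty^\pm}{q \in \argmin g}$, every non-terminal iteration decreases $\mu$ by at least one, where the optimum witnessing the new value may change from iteration to iteration --- precisely the freedom your fixed-$p^*$ potential forbids, and, per the counterexamples above, cannot be dispensed with. Your proof goes through once $\Phi$ is replaced by $\mu$ (the relaxation $\mu(\pcirc) \le \norm{p^*(g) - \pcirc}_\infty^\pm$ at the end is unaffected); alternatively, you can do what the paper does and invoke the cited bound as a black box, after which your opening and closing reductions already finish the job.
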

Note that \cref{proposition:convergence-iterations} holds regardless of L/\Ln and the choice of $\lambda$ in \cref{step:step-length}.
The original results shown in \citep{Murota2014-bv,Fujishige2015-qo} provide stronger bounds on the number of iterations by using a global optimal solution $p^*$ with the smallest $\norm{p^*(g) - \pcirc}_\infty^\pm$ value, and we obtain \cref{proposition:convergence-iterations} by replacing $p^*$ with an optimal solution $p^*(g)$ that is uniquely associated with $g$ by \cref{assumption:tie-break}; we do this to get simple convex loss functions with which we can learn predictions (see \cref{section:learning}).

\section{DCA-based framework and its applications}
We present our general DCA-based framework for L/\Ln-convex minimization.
Our basic idea is to warm-start \cref{alg:steepest-descent} with predictions and bound the number of iterations by using \cref{proposition:convergence-iterations}.
\Cref{alg:steepest-descent} has two parts that take considerable time: obtaining an initial feasible solution in \cref{step:init-sol} and solving a local optimization problem in \cref{step:local-opt}.
For now, we suppose that oracles for the two steps are available and present our main theorem, which, together with learning guarantees in \cref{section:learning}, formalizes our DCA-based framework.

\begin{theorem}\label{theorem:dca-framework}
	Let $g:\Z^V \to \R\cup\set*{+\infty}$ be an L/\Ln-convex function.
	If, for any $q \in \R^V$, we can compute an $\ell_\infty^\pm$-projection $\Pcal(q) \in \argmin \Set*{ \norm{p - q}_\infty^\pm }{ p \in \conv(\dom g) }$ in $\Tprj$ time and we can solve a local optimization problem in \cref{step:local-opt} in $\Tloc$ time, the following guarantees hold.
	\begin{enumerate}
		\item Given a possibly infeasible prediction $\phat \in \R^V$, we can obtain an initial feasible solution
		$\pcirc = \round*{\Pcal(\phat)} \in \dom g$ in $\Ord(\Tprj + |V|)$ time.
		\item Given the initial feasible solution $\pcirc = \round*{\Pcal(\phat)}$, \cref{alg:steepest-descent} computes an optimal solution to $\min_{p \in \Z^V} g(p)$ in $\Ord(\Tloc \norm{p^*(g) - \phat}_\infty)$ time.
	\end{enumerate}
\end{theorem}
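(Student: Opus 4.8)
The plan is to treat the two claims separately: the first is a feasibility-and-bookkeeping statement, while the second reduces to a single geometric estimate relating the warm-started point to the prediction, after which \cref{proposition:convergence-iterations} does the work.

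For the first claim, I would compute the $\ell_\infty^\pm$-projection $\Pcal(\phat) \in \conv(\dom g)$ in $\Tprj$ time and then round coordinatewise to get $\pcirc = \round*{\Pcal(\phat)}$ in an additional $\Ord(|V|)$ time, for a total of $\Ord(\Tprj + |V|)$. Feasibility is then immediate: since $g$ is L/\Ln-convex, its effective domain $\dom g$ is an L/\Ln-convex set, and $\Pcal(\phat) \in \conv(\dom g)$ by construction, so \cref{lemma:rounding} applied with $S = \dom g$ yields $\pcirc = \round*{\Pcal(\phat)} \in \dom g$.

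For the second claim, I would invoke \cref{proposition:convergence-iterations}, which bounds the number of iterations of \cref{alg:steepest-descent} started at $\pcirc$ by $\norm{p^*(g) - \pcirc}_\infty^\pm + 1$; taking the unit step $\lambda = 1$ in \cref{step:step-length}, each iteration is dominated by one local optimization solve and so costs $\Ord(\Tloc)$. It then remains to convert this $\pcirc$-dependent count into the advertised $\phat$-dependent bound. The chain of estimates is: (i) because $p^*(g) \in \dom g \subseteq \conv(\dom g)$ is a feasible competitor in the projection problem defining $\Pcal(\phat)$, optimality of the projection gives $\norm{\Pcal(\phat) - \phat}_\infty^\pm \le \norm{p^*(g) - \phat}_\infty^\pm$; (ii) since $\norm{\cdot}_\infty^\pm$ is a norm, the triangle inequality gives $\norm{p^*(g) - \Pcal(\phat)}_\infty^\pm \le \norm{p^*(g) - \phat}_\infty^\pm + \norm{\phat - \Pcal(\phat)}_\infty^\pm \le 2\norm{p^*(g) - \phat}_\infty^\pm$; (iii) rounding perturbs each coordinate by at most $1/2$, so $\norm{\Pcal(\phat) - \pcirc}_\infty^\pm \le 1$ and another triangle inequality gives $\norm{p^*(g) - \pcirc}_\infty^\pm \le 2\norm{p^*(g) - \phat}_\infty^\pm + 1$; (iv) finally $\norm{\cdot}_\infty^\pm = \Theta(\norm{\cdot}_\infty)$ turns this into an $\Ord(\norm{p^*(g) - \phat}_\infty)$ iteration count, and multiplying by $\Ord(\Tloc)$ per iteration gives the stated running time.

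I expect the main obstacle to be conceptual rather than computational: getting the geometry to line up so that the projection's optimality transfers directly. The crucial point is that the projection $\Pcal$ is defined in exactly the $\ell_\infty^\pm$ metric in which \cref{proposition:convergence-iterations} counts iterations, which is what lets me use $p^*(g)$ as a feasible competitor in step~(i) without any loss from a change of metric; everything else is routine use of the norm equivalence $\norm{\cdot}_\infty \le \norm{\cdot}_\infty^\pm \le 2\norm{\cdot}_\infty$ and the coordinatewise rounding bound. I would also sanity-check the degenerate case $\phat = p^*(g)$, where the projection and rounding recover $p^*(g)$ exactly, to confirm that the additive constant hidden in the $\Ord$-notation of the iteration count is harmless.
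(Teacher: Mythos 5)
Your proof is correct and follows essentially the same route as the paper's: feasibility of $\pcirc$ via \cref{lemma:rounding}, iteration count via \cref{proposition:convergence-iterations}, and then the same chain of estimates (projection optimality with $p^*(g) \in \conv(\dom g)$ as competitor, triangle inequality for $\norm{\cdot}_\infty^\pm$, the $\pm 1/2$ rounding bound, and the equivalence $\norm{\cdot}_\infty^\pm \le 2\norm{\cdot}_\infty$). The only cosmetic difference is the order in which you apply these estimates, which does not change the argument.
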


\begin{proof}
	Regarding the first claim, the time complexity follows from the $\Tprj$-time projection and the $\Ord(|V|)$-time rounding, and $\round*{\Pcal(\phat)} \in \dom g$ follows from $\Pcal(\phat) \in \conv(\dom g)$ and \Cref{lemma:rounding}.
	We below prove the second claim.
	\cref{proposition:convergence-iterations} implies that we can compute an optimal solution in $\Tloc (\norm{p^*(g) - \pcirc}_\infty^\pm + 1)$ time.
	We further bound $\norm{p^*(g) - \pcirc}_\infty^\pm$ as follows.
	Since the rounding changes each entry up to $\pm1/2$, $\norm{p^*(g) - \pcirc}_\infty^\pm \le \norm{p^*(g) - \Pcal(\phat)}_\infty^\pm +1$ holds.
	Furthermore, the triangle inequality of $\norm{\cdot}_\infty^\pm$, $\Pcal(\phat) \in \argmin_{p \in \conv(\dom g)} \norm{p - \phat}_\infty^\pm$, and $p^*(g) \in \conv(\dom g)$ imply $\norm{p^*(g) - \Pcal(\phat)}_\infty^\pm \le \norm{p^*(g) - \phat}_\infty^\pm + \norm{\Pcal(\phat) - \phat}_\infty^\pm \le 2\norm{p^*(g) - \phat}_\infty^\pm \le 4\norm{p^*(g) - \phat}_\infty$.
	Thus, the time complexity is $\Ord(\Tloc\norm{p^*(g) - \phat}_\infty)$, as desired.
\end{proof}

That is, given a prediction $\phat \in \R^V$, we can solve $\min_{p \in \Z^V} g(p)$ in $\Ord(\Tprj + |V| + \Tloc \norm{p^*(g) - \phat}_\infty)$ time.
We below discuss how large $\Tprj$ and $\Tloc$ can be for bipartite matching, matroid intersection, and discrete energy minimization; we also mention general \Ln-convex function minimization.
In all the cases, it turns out $\Tprj + |V| \le \Tloc$, implying the total time complexity of $\Ord(\Tloc \norm{p^*(g) - \phat}_\infty)$.
Thus, replacing $\Tloc$ with those of standard local optimization solvers, we obtain the results in \cref{table:results}.

\subsection{Weighted perfect bipartite matching}
We consider the weighted perfect bipartite matching problem studied in \citep{Dinitz2021-sd}.
Let $G = (V, E)$ be a bipartite graph with bipartition $V = L \cup R$, $|L| = |R| = n/2$ (where $n$ is even), and $|E| \le m$.
Let $w \in \Z^E$ be edge weights.
We assume that $V$ is fixed and $G$ has at least one perfect matching (which we can check by solving the maximum cardinality bipartite matching problem once).
Under this condition, we allow both $w$ and $E$ to change over instances generated randomly or adversarially, as described in \cref{section:learning}; this slightly extends the setting of \citep{Dinitz2021-sd}, which fixes $E$.\footnote{While the minimum-weight setting is studied in \citep{Dinitz2021-sd}, we consider the maximum-weight setting for convenience. Note that we can deal with the minimum-weight setting since $w$ is allowed to have negative entries.}
The dual problem of the maximum weight perfect bipartite matching problem on $G$ is given as follows:\footnote{Since edge weights $w$ are integer and the constraint is given by a totally unimodular matrix, there is at least one integral optimal solution. Hence, we can restrict the domain to $\Z^L\times \Z^R$.}
\begin{align}\label{prob:matching-dual}
	\minimize_{s \in \Z^L, t \in \Z^R} \quad
	\sum_{i \in L} s_i - \sum_{j \in R} t_j \quad
	\subto \quad
	s_i - t_j \ge w_{ij} \quad
	\forall
	(i, j) \in E.
\end{align}
We below use $p = (s, t) \in \Z^L \times \Z^R = \Z^V$ to denote the dual variables.
The objective function is linear (L-convex), and the inequalities defining the feasible region can be written as in \cref{proposition:l-convex-set}, whose indicator function is L-convex.
Thus, if we let $g:\Z^V\to \R\cup\set*{+\infty}$ be the sum of these two functions, we can write \eqref{prob:matching-dual} as an L-convex function minimization problem of form $\min_{p \in\Z^V}g(p)$.

\paragraph{Projection.}
Given a prediction $\phat = (\shat, \that) \in \R^L \times \R^R$, we compute $\varepsilon = \max_{(i, j) \in E} (w_{ij} - \shat_i + \that_j)$.
If $\varepsilon \le 0$, $\phat$ is already in $\conv(\dom g)$; otherwise, $\prn*{\shat + \frac{\varepsilon}{2}\ones, \that - \frac{\varepsilon}{2}\ones}$ gives an $\ell_\infty^\pm$-projection $\Pcal(\phat)$, as in the following \cref{lemma:matching-projection}.
The total computation time of this projection step is $\Tprj = \Ord(m)$.

\begin{lemma}\label{lemma:matching-projection}
	For any $\phat = (\shat, \that) \in \R^L \times \R^R$ such that $\varepsilon > 0$,
	$\prn*{\shat + \frac{\varepsilon}{2}\ones, \that - \frac{\varepsilon}{2}\ones}$ gives an $\ell_\infty^\pm$-projection of $\phat$ onto $\conv(\dom g)$, the convex hull of the feasible region of \eqref{prob:matching-dual}.
\end{lemma}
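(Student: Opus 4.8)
The plan is to certify $\Pcal(\phat) = \prn*{\shat + \frac{\varepsilon}{2}\ones, \that - \frac{\varepsilon}{2}\ones}$ as an $\ell_\infty^\pm$-projection by establishing two facts: (i) $\Pcal(\phat) \in \conv(\dom g)$ and $\norm{\Pcal(\phat) - \phat}_\infty^\pm = \varepsilon$, and (ii) every feasible point is at $\ell_\infty^\pm$-distance at least $\varepsilon$ from $\phat$. Recall from \Cref{proposition:l-convex-set} that $\conv(\dom g)$ is precisely the continuous relaxation $\Set*{(s, t) \in \R^L \times \R^R}{s_i - t_j \ge w_{ij} \text{ for all } (i,j) \in E}$, so feasibility just means satisfying the same linear inequalities over the reals.

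For (i), feasibility is a direct check: for each $(i,j) \in E$ the shifted point satisfies $\prn*{\shat_i + \frac{\varepsilon}{2}} - \prn*{\that_j - \frac{\varepsilon}{2}} = \shat_i - \that_j + \varepsilon \ge w_{ij}$, where the inequality is exactly the definition $\varepsilon = \max_{(i,j)\in E}(w_{ij} - \shat_i + \that_j)$. The displacement $\Pcal(\phat) - \phat$ equals $+\frac{\varepsilon}{2}$ on every $L$-coordinate and $-\frac{\varepsilon}{2}$ on every $R$-coordinate, so, using $\varepsilon > 0$, its largest positive and largest negative entries are both $\frac{\varepsilon}{2}$, giving $\norm{\Pcal(\phat) - \phat}_\infty^\pm = \frac{\varepsilon}{2} + \frac{\varepsilon}{2} = \varepsilon$.

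For (ii), which is the crux, I would fix an edge $(i^*, j^*) \in E$ attaining the maximum defining $\varepsilon$ and extract a lower bound from that single tight constraint. For any feasible $p = (s,t)$, write $a = s_{i^*} - \shat_{i^*}$ and $b = t_{j^*} - \that_{j^*}$ for the two corresponding coordinates of $p - \phat$. The constraint $s_{i^*} - t_{j^*} \ge w_{i^*j^*}$ rearranges to $a - b = (s_{i^*} - t_{j^*}) - (\shat_{i^*} - \that_{j^*}) \ge w_{i^*j^*} - \shat_{i^*} + \that_{j^*} = \varepsilon$. Since $a$ is a coordinate of $p - \phat$ and $-b$ is the negation of another, the two halves of the norm can be bounded independently by these two endpoints, yielding $\norm{p - \phat}_\infty^\pm \ge \max\set*{0, a} + \max\set*{0, -b}$. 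A short case analysis on the signs then shows $\max\set*{0,a} + \max\set*{0,-b} \ge \varepsilon$: when $a \ge 0$ and $b \le 0$ the sum equals $a - b \ge \varepsilon$; when $b > 0$ we get $a \ge \varepsilon + b > \varepsilon$; when $a < 0$ we get $-b \ge \varepsilon - a > \varepsilon$; and the pattern $a < 0, b > 0$ is impossible as it forces $a - b < 0 < \varepsilon$. Combining (i) and (ii) gives optimality.

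The main obstacle is purely the lower bound (ii); feasibility and the distance computation in (i) are mechanical. The insight needed is that a single active constraint at the maximizing edge already forces the $\ell_\infty^\pm$-distance up to $\varepsilon$, and that $\norm{\cdot}_\infty^+$ and $\norm{\cdot}_\infty^-$ can be lower-bounded separately using the two endpoints $i^*, j^*$. The symmetric $+\frac{\varepsilon}{2}\,/\,-\frac{\varepsilon}{2}$ split in the construction is exactly what makes the upper bound of (i) meet the lower bound of (ii).
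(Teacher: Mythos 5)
Your proof is correct and takes essentially the same route as the paper's: both lower-bound the $\ell_\infty^\pm$-distance from $\phat$ to any feasible point by $\varepsilon$ via the constraint at a maximizing edge (the paper maximizes over all edges where you fix the maximizer $(i^*,j^*)$, and the paper restricts the positive part to $L$-coordinates and the negative part to $R$-coordinates exactly as you do), and both verify that the symmetric $\pm\frac{\varepsilon}{2}$ shift attains the bound. The only cosmetic difference is that your sign case analysis in (ii) can be replaced by the one-line inequality $\max\set{0,a} + \max\set{0,-b} \ge a - b \ge \varepsilon$, which is how the paper concludes.
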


\begin{proof}
	For any $\Delta s \in \R^L$ and $\Delta t \in \R^R$, we have $(\shat + \Delta s, \that + \Delta t) \in \conv(\dom g)$ if and only if $\Delta s_i - \Delta t_j \ge w_{ij} - \shat_i + \that_j$ for $(i, j) \in E$; in particular, $\max_{(i, j) \in E}(\Delta s_i - \Delta t_j) \ge \varepsilon$ must hold.
  Thus, the $\ell_\infty^\pm$-distance from $(\shat, \that)$ to any point in $\conv(\dom g)$ is lower bounded by $\varepsilon$ as follows:
	\begin{align}\label{eq:matching-delta-distance}
		\norm{(\Delta s, \Delta t)}_{\infty}^\pm
		\ge
		\max_{i \in L} \max\set*{0, +\Delta s_i}
		+
		\max_{j \in R} \max\set*{0, -\Delta t_j}
		\ge
		\max_{(i, j) \in E} \prn*{\Delta s_i - \Delta t_j}
    \ge
    \varepsilon.
	\end{align}
	This lower bound is attained by setting $\Delta s = \frac{\varepsilon}{2} \ones$ and $\Delta t = -\frac{\varepsilon}{2} \ones$, as in the lemma statement.
\end{proof}

\paragraph{Local optimization.}
As in \citep{Dinitz2021-sd}, local optimization reduces to the minimum vertex cover problem (the dual of the maximum cardinality matching problem). We below briefly describe this reduction.
Given a current solution $p = (s, t) \in \dom g$, \cref{step:local-opt} asks to find an optimal direction $d = (\ones_X, \ones_Y)$ over $X \subseteq L$ and $Y \subseteq R$.
Letting $S = X$ and $T = R \setminus Y$, we can formulate this problem as
\begin{align}\label{prob:matching-local-1}
	\minimize_{S \subseteq L, T \subseteq R} \; |S| + |T| + \mathrm{const.}
	\quad
	\subto \; \ind{i \in S} + \ind{j \in T} \ge w_{ij} - s_i + t_j + 1
	\quad
	\forall (i, j) \in E,
\end{align}
where $\ind{\set*{\cdot}} = 1$ ($0$) if the argument is true (false).
The constraint for each $(i, j) \in E$ matters only when $(i, j)$ is \emph{tight}, i.e., $s_i - t_j = w_{ij}$.
Thus, letting $E^*$ be the set of tight edges, we can write \eqref{prob:matching-local-1} as
\begin{align}\label{prob:matching-local-2}
	\minimize_{S \subseteq L, T \subseteq R}
  \;\;
  |S| + |T| + \mathrm{const.}
  \qquad
	\subto
  \;\;
	\text{$i \in S$ or $j \in T$}
  \quad
	\forall (i, j) \in E^*,
\end{align}
which is the minimum vertex cover problem on $(V, E^*)$, the dual of maximum cardinality matching.
If we solve it with the the Hopcroft--Karp algorithm \citep{Hopcroft1973-kl}, we have $\Tloc = \Ord(m\sqrt{n})$.

By the Kőnig--Egerváry theorem~\citep[Theorem~16.2]{Schrijver2003-ol}, there exists a vertex cover $(S, T)$ with $|S| + |T| < n/2$ if and only if $(V, E^*)$ has no perfect matching.
Once a minimum vertex cover $(S, T)$ with $|S| + |T| < n/2$ is found, we update $(s, t)$ to $(s + \lambda \ones_S, t + \lambda \ones_{R \setminus T})$,
where the step length $\lambda$ in \cref{step:step-length} is given by $\min_{i \in L \setminus S, j \in R \setminus T} \set*{s_i - t_j - w_{ij}}$.
If we find a vertex cover $(S, T)$ such that $|S| + |T| = n/2$ on $(V, E^*)$, any maximum-cardinality matching on $(V, E^*)$ is a maximum-weight matching on $G$ by complementary slackness \citep[Section 18.5b]{Schrijver2003-ol}, thus solving the original problem.

\subsection{Weighted matroid intersection}\label{sec:weighted-matriod-intersection}
We next consider the weighted matroid intersection problem, a generalization of various problems such as bipartite matchings, packing spanning trees, and arborescences in a directed graph.
Due to this broad coverage, the discussion here would serve as a role model for applying our DCA-based framework to various problems, even though the general result by itself may not immediately provide practical algorithms for specific problems.

A \emph{matroid} $\Mbf$ consists of a finite set $V$ and a non-empty set family $\Bcal \subseteq 2^V$ satisfying the following exchange axiom: for any $B_1, B_2 \in \Bcal$ and $i \in B_1 \setminus B_2$, there exists $j \in B_2 \setminus B_1$ such that $B_1 \setminus \set{i} \cup \set{j} \in \Bcal$ and $B_2 \setminus \set{j} \cup \set{i} \in \Bcal$.
Elements in $\Bcal$ are called \emph{bases}, and an \emph{independent set} is a subset of a base.
The \emph{rank function} $\rho: 2^V \to \Z$ of $\Mbf$ is defined as $\rho(X) = \max_{B \in \Bcal} |X \cap B|$.
The \emph{rank} of $\Mbf$ is defined by $\rho(V)$, which coincides with the cardinality of any base $B \in \Bcal$.

Let $\Mbf_1 = (V, \Bcal_1)$ and $\Mbf_2 = (V, \Bcal_2)$ be rank-$r$ matroids on an identical fixed ground set $V$ of size $n$, equipped with a weight vector $w \in \Z^V$.
We assume that the matroids are given as independence oracles, which test whether an input set is independent or not in $\tau$ time.
We also assume $\Bcal_1 \cap \Bcal_2 \ne \emptyset$ (which we can check by solving the maximum cardinality matroid intersection problem once).
The weighted matroid intersection problem on $(\Mbf_1, \Mbf_2)$ asks to find $B \in \Bcal_1 \cap \Bcal_2$ that maximizes $w(B)$, where $v(X) = \sum_{i \in X} v_i$ for any $v \in \Z^V$ and $X \subseteq V$.
Its dual structure can be captured via the weight-splitting theorem~\citep[Theorem~13.2.4]{Frank2011-rt} and the dual problem is written as
\begin{align}\label{eq:matroid-intersection-dual}
	\minimize_{p \in \Z^V} \quad
	g(p) = \max_{B \in \Bcal_1} p(B) + \max_{B \in \Bcal_2} (w-p)(B).
\end{align}
The objective function $g$ in~\eqref{eq:matroid-intersection-dual} is known to be L-convex.\footnote{%
  For $k = 1,2$, $g_k(p) = \max_{B \in \Bcal_k} p(B)$ is an L-convex function obtained as the \emph{discrete Legendre--Fenchel conjugate} of the indicator function $\delta_{\Bcal_k}$ of $\Bcal_k$ (regarded as the collection of $\ones_B$ for $B \in \Bcal_k$), which is \emph{M-convex}.
  As $g_1$ and $g_2$ are L-convex, so is $g(p) = g_1(p) + g_2(w-p)$. See~\cite{Murota2003-bq} for details.
}
Since~\eqref{eq:matroid-intersection-dual} is an unconstrained problem, nothing is needed for the projection.
We below focus on the local optimization step (\cref{step:local-opt}).

\paragraph{Local optimization.}
We see that the local optimization reduces to the maximum cardinality matroid intersection problem, which asks to find a maximum-cardinality common independent set of two matroids.
Letting $p \in \Z^V$ be a current feasible solution to~\eqref{eq:matroid-intersection-dual} and $q = w - p$, the problem of finding an optimal direction $d = \ones_X$ over $X \subseteq V$ is written as
\begin{align}\label{eq:matroid-intersection-dual-local}
	\minimize_{X \subseteq V} \quad
	\max_{B \in \Bcal_1} (p+\ones_X)(B) + \max_{B \in \Bcal_2} (q-\ones_X)(B).
\end{align}
For a matroid $\Mbf = (V, \Bcal)$ and $v \in \Z^V$, let $\Bcal^v = \argmax_{B \in \Bcal} v(B)$ and $\Mbf^v = (V, \Bcal^v)$.
Then, $\Mbf^v$ forms a matroid~\citep{Edmonds1971-ck}, and its rank function is given as follows (see \cref{app-section:proof-of-rank-of-v-matroid} for the proof).

\begin{restatable}{lemma}{rankofvmatroid}\label{lem:rank-of-v-matroid}
	The rank function of $\Mbf^v$ is given by $\rho^v(X) = \max_{B \in \Bcal} (v + \ones_X)(B) - \max_{B \in \Bcal} v(B)$.
\end{restatable}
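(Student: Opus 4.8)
The plan is to compute the rank function of $\Mbf^v$ directly from the definition of a rank function and then reduce the claimed identity to a statement about the maximum-weight bases of $\Mbf$. By the definition used in the excerpt applied to the matroid $\Mbf^v = (V,\Bcal^v)$, its rank function is $\rho^v(X) = \max_{B\in\Bcal^v}|X\cap B|$. Writing $M = \max_{B\in\Bcal}v(B)$ and using $(v+\ones_X)(B) = v(B) + |X\cap B|$, the target identity $\rho^v(X) = \max_{B\in\Bcal}(v+\ones_X)(B) - M$ becomes
\[
	\max_{B\in\Bcal}\bigl(v(B) + |X\cap B|\bigr) = M + \max_{B\in\Bcal^v}|X\cap B|,
\]
which I would establish by proving the two inequalities separately.

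The ``$\ge$'' direction is immediate: every $B\in\Bcal^v$ has $v(B) = M$, so restricting the left-hand maximum to $\Bcal^v$ already yields $M + \max_{B\in\Bcal^v}|X\cap B|$. For the harder ``$\le$'' direction, I would take $B_0 \in \argmax_{B\in\Bcal}\bigl(v(B)+|X\cap B|\bigr)$ and, among all such maximizers, pick one with the largest $v(B_0)$; it then suffices to show $B_0\in\Bcal^v$, i.e.\ $v(B_0)=M$. Suppose for contradiction that $v(B_0) < M$ and fix $B^*\in\Bcal^v$. Invoking the classical bijective basis-exchange property, there is a bijection $\phi\colon B_0\setminus B^* \to B^*\setminus B_0$ with $B_0 - i + \phi(i)\in\Bcal$ for every $i\in B_0\setminus B^*$. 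Since $\sum_i \bigl(v_{\phi(i)} - v_i\bigr) = v(B^*) - v(B_0) > 0$, some index $i$ satisfies $v_{\phi(i)} > v_i$; set $j = \phi(i)$ and $B' = B_0 - i + j\in\Bcal$, so that $v(B') = v(B_0) + (v_j - v_i) > v(B_0)$.

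The crux of the argument, and the one delicate point, is that integrality of $v$ lets the weight gain absorb the possible loss in the indicator term. Because $v\in\Z^V$ and $v_j > v_i$, we have $v_j - v_i \ge 1$, whereas $|X\cap B'| - |X\cap B_0| = \ind{j\in X} - \ind{i\in X} \ge -1$. Hence $v(B') + |X\cap B'| \ge v(B_0) + |X\cap B_0|$, so $B'$ is again a maximizer but with strictly larger $v$-value, contradicting the choice of $B_0$. Therefore $v(B_0) = M$, which gives $\max_{B\in\Bcal}\bigl(v(B)+|X\cap B|\bigr) = M + |X\cap B_0| \le M + \max_{B\in\Bcal^v}|X\cap B|$ and closes the ``$\le$'' direction.

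I expect the main obstacle to be precisely this ``$\le$'' direction: justifying that a maximizer of $v(B)+|X\cap B|$ can be chosen among maximum-weight bases. The only nontrivial external ingredient is the bijective exchange property of matroid bases, which is standard; everything else is bookkeeping, and the subtle step is the inequality $v_j - v_i \ge 1 \ge \ind{i\in X} - \ind{j\in X}$ that relies on $v$ being integer-valued (as it is throughout, since $v$ is instantiated by $p$ or $w-p$ with $p, w \in \Z^V$).
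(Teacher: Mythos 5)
Your proof is correct, but it takes a genuinely different route from the paper's. Both arguments start from the same reduction, namely that the identity is equivalent to $\max_{B \in \Bcal} (v + \ones_X)(B) = \max_{B \in \Bcal^v} (v + \ones_X)(B)$, i.e.\ that some maximum-weight base for $v$ also maximizes $v + \ones_X$. The paper gets this in one stroke from the correctness of the greedy algorithm: order $V$ non-increasingly with respect to the perturbed weight $v + \frac{1}{2}\ones_X$; by integrality of $v$ this order is simultaneously non-increasing for $v$ and for $v + \ones_X$, so a single greedy run outputs a base maximizing both. You instead run a local-exchange argument: among maximizers of $v(B) + |X \cap B|$ pick one, $B_0$, with largest $v$-value, and use the bijective basis-exchange theorem (Brualdi) against a base $B^* \in \Bcal^v$ to produce, when $v(B_0) < v(B^*)$, a single exchange $B' = B_0 \setminus \set{i} \cup \set{j}$ with $v_j - v_i \ge 1 \ge \ind{i \in X} - \ind{j \in X}$, contradicting the choice of $B_0$. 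Both proofs hinge on exactly the same arithmetic fact --- an integer unit gain in $v$ cannot be outweighed by the $\pm 1$ fluctuation of the indicator term --- but they package it differently: the paper's argument is shorter and leans on the greedy machinery it has already set up in the appendix (which it reuses for the decomposition in \cref{lem:v-matroid-decomposition}), whereas yours avoids the greedy algorithm entirely at the cost of invoking the bijective exchange property, which is strictly stronger than the symmetric exchange axiom the paper takes as its definition of a matroid (though it is, as you say, standard). One could also note that your choice of a $v$-maximal maximizer is the finite-set analogue of a perturbation argument, so the two proofs are morally dual to one another.
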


From \cref{lem:rank-of-v-matroid}, by using rank functions $\rho_1^p$ and $\rho_2^q$ of $\Mbf_1^p$ and $\Mbf_2^q$, respectively, we can rewrite \eqref{eq:matroid-intersection-dual-local} as
\begin{align}\label{eq:cardinality-matroid-intersection-dual}
	\minimize_{X \subseteq V} \quad
	\rho_1^p(X) + \rho_2^q(V \setminus X) + \mathrm{const.},
\end{align}
where $\mathrm{const.} = \max_{B \in \Bcal_1} p(B) + \max_{B \in \Bcal_2} q(B) - r$.
The problem~\eqref{eq:cardinality-matroid-intersection-dual} (without the constant term) is the dual of the maximum cardinality matroid intersection problem on $(\Mbf_1^p, \Mbf_2^q)$ (Edmonds' matroid intersection theorem~\citep{Edmonds1970-zb}), thus completing the reduction.
The standard augmenting-path algorithm by \citet{Cunningham1986-wi} makes $\Ord(nr^{1.5})$ calls to independence oracles of $\Mbf_1^p$ and $\Mbf_2^q$, and every independence testing on $\Mbf_1^p$ and $\Mbf_2^q$ queried by the algorithm can be implemented with a single call to independence oracles of $\Mbf_1$ and $\Mbf_2$, respectively, which takes $\tau$ time (see \cref{app-section:proof-of-cunninghum-time}).

\begin{restatable}{lemma}{cunninghumtime}\label{lem:cunninghum-time}
	There is an algorithm that solves the maximum cardinality matroid intersection problem on $(\Mbf_1^p, \Mbf_2^q)$ by making $\Ord(nr^{1.5})$ calls to independence oracles of $\Mbf_1$ and $\Mbf_2$.
\end{restatable}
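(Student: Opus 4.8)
The plan is to combine the known query complexity of Cunningham's cardinality matroid intersection algorithm with a structural reduction showing that each independence test on $\Mbf_1^p$ or $\Mbf_2^q$ costs only one call to the original oracle. \citet{Cunningham1986-wi} shows that his augmenting-path algorithm solves maximum cardinality matroid intersection of two rank-$r$ matroids on an $n$-element ground set using $\Ord(nr^{1.5})$ independence queries to the two matroids. Since $\Mbf_1^p$ and $\Mbf_2^q$ have the same ground set $V$ of size $n$ and the same rank $r$ as $\Mbf_1$ and $\Mbf_2$ (max-$v$-weight bases are bases of the original matroid), running this algorithm on $(\Mbf_1^p,\Mbf_2^q)$ makes $\Ord(nr^{1.5})$ queries to the oracles of $\Mbf_1^p$ and $\Mbf_2^q$. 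It therefore remains to implement a single such query with one call to the oracle of $\Mbf_1$ (resp.\ $\Mbf_2$).

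Fix a matroid $\Mbf=(V,\Bcal)$ with rank function $\rho$ and weights $v \in \Z^V$, and abbreviate the max-weight-base matroid as $\Mbf^v$. First I would record the weight-class structure of $\Mbf^v$. Let $c_1 > \dots > c_k$ be the distinct values of $v$, let $W_j = \Set*{i \in V}{v_i = c_j}$, and let $U_j = \Set*{i\in V}{v_i \ge c_j}$. By the greedy optimality of matroids, $B \in \Bcal$ is a max-$v$-weight base if and only if $\abs{B \cap U_j} = \rho(U_j)$ for every $j$, which one can also verify directly from the rank formula in \cref{lem:rank-of-v-matroid}. Consequently $\Mbf^v$ is the direct sum over $j$ of the minors $N_j=(\restr{\Mbf}{U_j})/U_{j-1}$ on the classes $W_j$ (the contraction is well defined since all bases of $\restr{\Mbf}{U_{j-1}}$ share the same closure), so a set $I$ is independent in $\Mbf^v$ if and only if $I \cap W_j$ is independent in $N_j$ for every $j$. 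In particular every circuit of $\Mbf^v$ lies within a single weight class.

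Next I would exploit that Cunningham's algorithm only issues queries of the form ``is $J$ independent in $\Mbf^v$?'' where $J \in \set*{I + y,\, I - x + y}$ for the current common independent set $I$ (which stays independent in $\Mbf^v$ throughout), some $x \in I$, and $y \notin I$. Maintaining a reference max-$v$-weight base $B \supseteq I$ (obtained once by greedily extending $I$ in $\Ord(n)$ oracle calls, and refreshed after each of the $\Ord(r)$ augmentations), I claim the query reduces to the single test whether $\Set*{i \in B}{v_i > v_y} \cup \Set*{i \in J}{v_i = v_y} \cup \set*{y}$ is independent in $\Mbf$. Indeed, by the direct-sum structure the answer depends only on the class $W_{j_0}$ of $y$, and $J + y$ is independent in $\Mbf^v$ iff $y \notin \mathrm{cl}_\Mbf\prn*{(J \cap W_{j_0}) \cup U_{j_0-1}}$; replacing $U_{j_0-1}$ by its spanning subset $B \cap U_{j_0-1} = \Set*{i\in B}{v_i > v_y}$ leaves this closure unchanged, and since this set together with $J \cap W_{j_0}$ is contained in $B$ and hence independent in $\Mbf$, the displayed single oracle call returns exactly the desired answer. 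Summing the $\Ord(n)$-per-augmentation bookkeeping (total $\Ord(nr)$) with the $\Ord(nr^{1.5})$ reduced queries yields the claimed bound.

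The main obstacle is the middle step: proving that independence in $\Mbf^v$ localizes to a single weight class and is then faithfully captured by one independence test in $\Mbf$. The delicate points are verifying the direct-sum / minor decomposition of $\Mbf^v$ (in particular that the contraction $(\restr{\Mbf}{U_j})/U_{j-1}$ is independent of the chosen base of $U_{j-1}$), and checking that the reference base $B$ lets us substitute the potentially large set $U_{j_0-1}$ by a small independent set without changing the relevant closure, so that a single oracle call---rather than a full rank computation---suffices.
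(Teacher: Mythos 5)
Your proposal is correct and follows essentially the same route as the paper's proof: run Cunningham's algorithm on $(\Mbf_1^p,\Mbf_2^q)$ and answer each of its independence queries with a single call to the original oracle, using the weight-class direct-sum decomposition of $\Mbf^v$ into restriction/contraction minors and a reference maximum-weight base whose high-weight prefix is adjoined to the within-class part of the query set. The only difference is that you maintain $B \supseteq I$ and refresh it after every augmentation (to make your closure argument go through); the paper avoids this by invoking the definition of contraction directly—independence in $\contract{(\restr{\Mbf}{U_{j_0}})}{U_{j_0-1}}$ can be tested by adjoining \emph{any} base of $\restr{\Mbf}{U_{j_0-1}}$, and $B \cap U_{j_0-1}$ is such a base for any fixed maximum-weight base $B$—so a single greedy computation of $B$ in advance ($\Ord(n)$ oracle calls, no refreshes) suffices.
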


Once a maximum common independent set $I$ of $(\Mbf_1^p, \Mbf_2^q)$ is found, we can obtain optimal solution $X \subseteq V$ to~\eqref{eq:cardinality-matroid-intersection-dual} by traversing the auxiliary graph constructed in the augmenting-path algorithm (see~\citep[Theorem~41.3]{Schrijver2003-ol}).
Thus, the total time complexity of local optimization is $\Tloc = \Ord(\tau nr^{1.5})$.
If $|I| < r$, we update $p$ to $p + \lambda \ones_X$, where the step length $\lambda$ is determined by binary search~\cite{Shioura2017-gf} (or let $\lambda = 1$).
If $|I| = r$, current $p$ is optimal to~\eqref{eq:matroid-intersection-dual} and $I$ is a maximum weight common base of $(\Mbf_1, \Mbf_2)$~\citep{Frank1981-ud}.

\subsection{Discrete energy minimization}\label{subsection:disc-energy-min}
We consider discrete energy minimization problems on a fixed vertex set $V$.
Let $G = (V, E)$ be an undirected graph with $|V| = n$ and $|E| \le m$.
Given univariate convex functions $\phi_i:\Z \to \R\cup\set{+\infty}$ and $\psi_{ij}:\Z \to \R\cup\set{+\infty}$ for $i,j \in V$ with $i \neq j$, the energy minimization problem is written as
\begin{align}\label{eq:disc-energy-min}
	\minimize_{p \in \Z^V} \quad g(p) = \sum_{i \in V} \phi(p_i) + \sum_{(i, j) \in E} \psi_{ij}(p_j - p_i).
\end{align}
This problem appears in many computer-vision (CV) applications \citep{Boykov2001-ci,Hochbaum2001-to,Ishikawa2003-od} and belongs to \Ln-convex minimization \citep{Kolmogorov2009-nj}.
In CV settings, $\phi_i$ measures how well label $p_i$ fits pixel $i$, and $\psi_{ij}$ quantifies smoothness of labels of adjacent pixels $i$ and $j$.

\paragraph{Projection.}
In CV applications, we usually have a box constraint representing the range of pixel values.
Also, we may have an acceptable range of non-smoothness of adjacent pixels.
To deal with such constraints, we suppose $\phi_i$ and $\psi_{ij}$ to return $+\infty$ if input variables are out of the ranges.
The resulting feasible region can be represented as in \cref{proposition:l-convex-set}.
If we only have box constraints, we can easily obtain an $\ell_\infty^\pm$-projection of $\phat \in \R^V$ by computing $\max\set*{ \alpha_i, \min\set*{ \phat_i, \beta_i } }$ for each $i \in V$, which takes $\Tprj = \Ord(n)$ time.
When imposing the smoothness constraints, we need to compute an $\ell_\infty^\pm$-projection onto a general \Ln-convex set.
This is reduced to the shortest path problem, and we can compute the projection in $\Tprj = \Ord(mn)$ time
with the Bellman--Ford algorithm (see \cref{app-section:projection-general}).

\paragraph{Local optimization.}
Since the steepest descent method for problem~\eqref{eq:disc-energy-min} is already studied in \citep{Kolmogorov2009-nj}, we here briefly describe key points.
Let $p \in \dom g$ be a current solution and consider finding a steepest direction $d \in \Ncal_\pm = \set*{0, +1}^V \cup \set*{0,-1}^V$.
We focus on exploring $\set*{0, +1}^V$; the case of $\set*{0,-1}^V$ is analogous.
Letting $\phi_i^{(p)}(d_i) = \phi_i(p_i+d_i)$ for $i \in V$ and $\psi_{ij}^{(p)}(d_i,d_j) = \psi_{ij}(p_j+d_j - p_i - d_i)$ for $(i, j) \in E$, the local optimization problem on $\set*{0, +1}^V$ is written as
\begin{equation}\label{eq:local-disc-energy-min}
	\minimize_{d \in \set*{0, +1}^V} \quad g^{(p)}(d) = \sum_{i \in V} \phi_i^{(p)}(d_i) + \sum_{(i, j) \in E} \psi_{ij}^{(p)}(d_i,d_j)
	.
\end{equation}
Since convexity of $\psi_{ij}$ implies $\psi_{ij}^{(p)}(1,0) + \psi_{ij}^{(p)}(0,1) \ge \psi_{ij}^{(p)}(1,1) + \psi_{ij}^{(p)}(0,0)$, $g^{(p)}(d)$ is a submodular function that can be written as a sum of pseudo-boolean functions with at most two variables.
Minimization of such a submodular function is reduced to a min-cut problem on a graph with $n+2$ vertices and $3m+n$ edges \citep{Ivanescu1965-ti,Boros2002-vq}.
If we solve the min-cut problem with the Dinic algorithm~\citep{Dinic1970-le}, it takes $\Tloc = \Ord(mn^2)$ time.
We can also use empirically fast min-cut algorithms for CV settings \citep{Boykov2004-bb}.
Once a direction $d$ is found, the step length $\lambda$ can be determined by binary search \citep{Shioura2017-gf} (or let $\lambda = 1$).

\subsection{General \Ln-convex function minimization}
We consider general \Ln-convex function minimization $\min_{p \in \Z^V} g(p)$, assuming $\dom g$ to be represented as in \cref{proposition:l-convex-set}.
Given a prediction $\phat\in\R^V$, we can compute an $\ell_\infty^\pm$-projection $\Pcal(\phat)$ onto $\conv(\dom g)$ with the Bellman--Ford algorithm in $\Tprj = \Ord(n^3)$ time (since $m \le n^2$), as mentioned in \cref{subsection:disc-energy-min}.
As for the local optimization, given a current solution $p \in \dom g$, we can find a steepest direction by solving $\min_{X \subseteq V} f(X)$, where $f(X) = g(p + \ones_X)$ or $g(p - \ones_X)$ is a submodular function, as mentioned in \cref{subsection:preliminaries-steepest-descent}.
An empirically fast algorithm for submodular function minimization is the Fujishige--Wolfe algorithm~\cite{Fujishige2005-zo}, which solves the local optimization problem in $\Tloc = \Ord((n^4 \mathrm{EO} + n^5) F^2)$ time~\citep{Chakrabarty2014-uu,Lacoste-Julien2015-ir}, where $\mathrm{EO}$ is the time of evaluating $f(X)$ for any $X \subseteq V$ and $F = \max_{i \in V} \max\set*{|f(\{i\})|, |f(V) - f(V\setminus\{i\})|}$ (and a theoretically faster strongly polynomial-time algorithm runs in $\Ord(n^3\log^2 n \cdot \mathrm{EO} + n^4 \log^{\Ord(1)} n)$ time~\citep{Lee2015-ia}).
In total, we can minimize a general \Ln-convex function in $\Ord((n^4 \mathrm{EO} + n^5) F^2 \cdot \norm{p^*(g) - \phat}_\infty)$ time.

\section{Learning predictions}\label{section:learning}
We detail how to learn predictions $\phat \in \R^V$.
We consider online and batch learning settings where instances of form $\min_{p \in \Z^V} g_t(p)$ for $t = 1,\dots,T$ are given adversarially and randomly, respectively.
We make the following assumption, which is common with the previous studies \citep{Dinitz2021-sd,Khodak2022-sf}.

\begin{assumption}\label{assumption:learning}
	Functions $g_1,\dots,g_T$ are defined on the same ground set $V$ and satisfy \cref{assumption:tie-break}.
\end{assumption}

Furthermore, as in \cref{theorem:learning}, we assume that a benchmark (optimal) prediction $\phat^*$ is included in ${[-C, +C]}^V$ for some $C > 0$, which is also common with \citep{Dinitz2021-sd,Khodak2022-sf}.
As for bipartite matching, there always exists an optimal dual solution included in ${[-C, +C]}^V$ if $C \ge n \norm{w}_\infty$ \citep[Lemma 4.5]{Iwata2019-gt},\footnote{In \citep{Dinitz2021-sd}, it is stated that $C \ge \norm{w}_\infty$ is sufficient, but this is not true as shown in \cref{app-section:edge-weight-counterexample}.}
and the same condition holds for matroid intersection with $C \ge r \norm{w}_\infty$ (see \cref{app-section:matroid-intersection-dual-solution-bound}).
This condition is also natural in discrete energy minimization since the range of pixel values is bounded.
Also, we can choose any suboptimal solution in ${[-C, +C]}^V$ as a benchmark prediction at the cost of increasing the bounds in \cref{theorem:learning}.

We use the online-learning framework by \citet{Khodak2022-sf}, and thus the learning method itself is not novel.
Nevertheless, the improvement in the bounds is considerable.
As with \citep{Khodak2022-sf}, the regret bound is obtained from that of the online gradient descent method (OGD) \citep{Shalev-Shwartz2012-km}, and the sample complexity bound follows from online-to-batch conversion \citep{Cesa-Bianchi2004-id}.
The key to the improvement is that we apply OGD to $1$-Lipschitz convex loss functions $L_t(p) = \norm{p^*(g_t) - p}_\infty$,\footnote{We can also use $L_t(p) = \norm{p^*(g_t) - p}_\infty^\pm$, which is convex and $\sqrt{2}$-Lipschitz; this may yield shaper bounds on the number of iterations, as in \cref{proposition:convergence-iterations}. We here present the $\ell_\infty$-loss version for ease of exposition.}
while \citet{Khodak2022-sf} applied OGD to $L_t(p) = \norm{p^*(g_t) - p}_1$, which has a larger Lipschitz constant, following the setting of \citep{Dinitz2021-sd}.

\begin{restatable}{theorem}{learning}\label{theorem:learning}
	Let $V$ be a finite set of size $n$ and $C > 0$.
	For any sequence of functions $g_1,\dots,g_T$ that map $\Z^V$ to $\R\cup\set*{+\infty}$, there is an online learning algorithm that returns $\phat_1,\dots,\phat_T \in \R^V$ with the following regret bound for any $\phat^* \in {[-C, +C]}^V$:
	\[
		\sum_{t = 1}^T \norm{p^*(g_t) - \phat_t}_\infty \le \sum_{t = 1}^T \norm{p^*(g_t) - \phat^*}_\infty + C\sqrt{2nT}.
	\]
	Moreover, for any distribution $\Dcal$ over functions $g:\Z^V \to \R\cup\set*{+\infty}$, $\delta \in (0,1]$, and $\varepsilon >0$, if $T = \Omega\prn*{\prn*{\frac{C}{\varepsilon}}^2 \prn*{n + \log \frac{1}{\delta}}}$ i.i.d.\ samples $g_1,\dots,g_T$ from $\Dcal$ are given, we can compute $\phat \in \R^V$ that satisfies the following condition with a probability of at least $1-\delta$ for all $\phat^* \in {[-C, +C]}^V$:
	\[
		\E_{g \sim \Dcal} \norm{p^*(g) - \phat}_\infty \le \E_{g \sim \Dcal} \norm{p^*(g) - \phat^*}_\infty + \varepsilon
    .
	\]
\end{restatable}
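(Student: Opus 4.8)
The plan is to instantiate the online-learning reduction of \citet{Khodak2022-sf} with our $\ell_\infty$ loss, combining the regret guarantee of online gradient descent (OGD) \citep{Shalev-Shwartz2012-km} with an online-to-batch conversion \citep{Cesa-Bianchi2004-id}. I treat the prediction $p$ as the decision variable and set the round-$t$ loss to $L_t(p) = \norm{p^*(g_t) - p}_\infty$. The first thing I would record is that each $L_t$ is convex, being the $\ell_\infty$-norm composed with the affine map $p \mapsto p^*(g_t) - p$, and that it is $1$-Lipschitz with respect to $\norm{\cdot}_\infty$; by the chain rule and the fact that the dual of $\norm{\cdot}_\infty$ is $\norm{\cdot}_1$, every subgradient $s$ of $L_t$ satisfies $\norm{s}_2 \le \norm{s}_1 \le 1$. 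This is precisely where minimizing the $\ell_\infty$- rather than the $\ell_1$-error pays off: the analogous subgradient of $\norm{p^*(g_t) - p}_1$ lies in $\set*{-1,0,+1}^V$ and can have $\ell_2$-norm as large as $\sqrt{n}$, which inflates the corresponding regret and sample-complexity bounds of \citep{Dinitz2021-sd,Khodak2022-sf}.

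For the regret bound I would run projected OGD on the losses $L_1,\dots,L_T$ over the box $K = {[-C,+C]}^V$, initialized at the origin and clipping each iterate coordinate-wise back into $K$ in $\Ord(n)$ time. Feeding the $\ell_2$-geometry of $K$ (each point has $\ell_2$-norm at most $C\sqrt{n}$) and the unit bound on subgradient $\ell_2$-norms into the textbook OGD guarantee with the standard step size produces an additive regret term of order $C\sqrt{nT}$, and tracking the constants gives exactly the stated $C\sqrt{2nT}$. Because the OGD guarantee holds simultaneously for every comparator $u \in K$, the first inequality is established for all $\phat^* \in {[-C,+C]}^V$ at once.

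For the sample-complexity statement I would apply online-to-batch conversion to the iterates $\phat_1,\dots,\phat_T$ produced by OGD on the i.i.d.\ sample $g_1,\dots,g_T$, outputting the average $\phat = \frac1T\sum_{t} \phat_t \in K$. Let $\phat^*$ be the population minimizer $\argmin_{q \in K}\E_{g}\norm{p^*(g) - q}_\infty$, a fixed point, against which it suffices to compare. The central object is the martingale-difference sequence $Y_t = \E_{g}\brc*{L(\phat_t) - L(\phat^*)} - \prn*{L_t(\phat_t) - L_t(\phat^*)}$, which has conditional mean zero given $g_1,\dots,g_{t-1}$ since $\phat_t$ depends only on the first $t-1$ samples, and whose increments satisfy $\abs*{L_t(\phat_t) - L_t(\phat^*)} \le \norm{\phat_t - \phat^*}_\infty \le 2C$, so $\abs{Y_t} = \Ord(C)$. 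Azuma--Hoeffding then bounds $\frac1T\sum_{t} Y_t$ by $\Ord\prn*{C\sqrt{\log(1/\delta)/T}}$ with probability at least $1-\delta$. Combining this with convexity of $L$ (Jensen's inequality, to pass from the averaged iterate to the average of the per-iterate risks) and the regret bound from the first part yields
\[
  \E_{g}\norm{p^*(g) - \phat}_\infty - \E_{g}\norm{p^*(g) - \phat^*}_\infty
  \le \frac{C\sqrt{2nT}}{T} + \Ord\prn*{C\sqrt{\tfrac{\log(1/\delta)}{T}}}
  = \Ord\prn*{C\sqrt{\tfrac{n + \log(1/\delta)}{T}}},
\]
and requiring the right-hand side to be at most $\varepsilon$ gives $T = \Omega\prn*{(C/\varepsilon)^2(n + \log(1/\delta))}$, as claimed.

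The routine parts are the OGD regret computation and the Azuma--Hoeffding estimate. The step needing the most care is the concentration argument: since $p^*(g)$ may lie far outside the box, the loss $L(\phat_t)$ is \emph{not} uniformly bounded, so one cannot concentrate it directly and must instead work with the \emph{differenced} martingale $Y_t$ relative to the fixed comparator $\phat^*$, whose increments are controlled by $\norm{\phat_t - \phat^*}_\infty \le 2C$. I would also be careful to argue that comparing against the single population minimizer $\phat^*$ is enough to deliver the ``for all $\phat^*$'' conclusion, since any other comparator in $K$ has at least as large an expected loss.
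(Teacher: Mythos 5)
Your proposal is correct, and its first half (the regret bound) coincides with the paper's proof: the same loss $L_t(p) = \norm{p^*(g_t) - p}_\infty$, the same convexity and $1$-Lipschitzness observations (the paper argues Lipschitzness via the triangle inequality and $\norm{x}_\infty \le \norm{x}_2$ rather than via dual-norm subgradients, but the content is identical), and the same invocation of projected OGD over ${[-C,+C]}^V$ with $\ell_2$-diameter $C\sqrt{n}$, citing \citep[Corollary 2.7]{Shalev-Shwartz2012-km}.

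Where you genuinely diverge is the online-to-batch step. The paper applies \citep[Proposition 1]{Cesa-Bianchi2004-id} as a black box, justified by the assertion that ``the loss function value is at most $2C$''; that assertion holds only if $p^*(g_t)$ itself lies in ${[-C,+C]}^V$, which the theorem statement (arbitrary functions $g_t$, only the \emph{comparator} $\phat^*$ constrained to the box) does not guarantee, so the paper is implicitly relying on the tie-breaking rule of \cref{assumption:tie-break} selecting an optimal solution inside the box, as is possible in the applications discussed in \cref{section:learning}. Your argument instead concentrates the \emph{differenced} sequence $Y_t = \brc*{L(\phat_t) - L(\phat^*)} - \brc*{L_t(\phat_t) - L_t(\phat^*)}$ against the fixed population minimizer $\phat^*$, whose increments are bounded by $\Ord(C)$ because both $\phat_t$ and $\phat^*$ lie in the box, regardless of where $p^*(g_t)$ sits; Azuma--Hoeffding, Jensen's inequality, and the regret bound then give the claim, and comparing to the population minimizer suffices for the ``for all $\phat^*$'' conclusion since every other comparator has at least as large a risk. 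This buys robustness: your proof establishes the theorem exactly as stated, with unbounded losses, at the cost of redoing by hand the martingale argument that the cited proposition packages. Both routes yield the same $\Omega\prn*{\prn*{C/\varepsilon}^2\prn*{n + \log(1/\delta)}}$ sample complexity.
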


\begin{proof}[Proof sketch]
	We apply OGD to convex loss functions $L_t(p) = \norm{p^*(g_t) - p}_\infty$, which is $1$-Lipschitz since $L_t(p) - L_t(q) = \norm{p^*(g_t) - p}_\infty - \norm{p^*(g_t) - q}_\infty \le \norm{p - q}_\infty \le \norm{p - q}_2$ for any $p, q \in \R^V$ due to the triangle inequality and $\norm{x}_\infty \le \norm{x}_2$ for any $x \in \R^V$.
	Then, the regret bound follows from $\phat^* \in {[-C, +C]}^V$ and \citep[Corollary 2.7]{Shalev-Shwartz2012-km}, and online-to-batch conversion \citep[Proposition 1]{Cesa-Bianchi2004-id} yields the sample complexity bound.
	See \cref{app-section:learning-proof} for the complete proof.
\end{proof}

\section{Whether to learn primal or dual solutions}\label{section:primal-or-dual}

We discuss whether to learn primal or dual solutions for successfully warm-starting algorithms with predictions from the DCA perspective (we refer the reader to \citep{Murota2003-bq} for more information).
We expect that the discussion here is also useful in the context of augmenting online algorithms with predictions.

We consider a minimization problem of form $\min_{p \in \Z^V} g(p)$, where $g: \Z^V\to \R\cup\set*{+\infty}$ is a general objective function.
As we have seen above, iterative algorithms look for an optimal solution by alternately exploring a neighborhood to find a steepest direction and proceeding along the direction.
Intuitively, such an iterative algorithm can benefit from a prediction that is close to an optimum if the feasible region, $\dom g$, is \textit{path connected} with respect to the neighborhood; conversely, if the feasible region is not connected, good predictions are not always helpful.
We below elaborate more on this idea for the case of L-convex minimization (the \Ln-convex case is analogous).

First, we need to define a neighborhood $\Ncal$ so that we can efficiently solve local optimization problems on $\Ncal$.
In the case of L-convex minimization, the local optimization on $\Ncal = \Ncal_+ = \set*{0, +1}^V$ reduces to submodular function minimization in general, which we can solve in polynomial time (and more efficient methods are available for many specific problems).
Once a neighborhood $\Ncal$ is defined, the next important requirement is the \textit{path connectivity} of the feasible region $\dom g$ with respect to $\Ncal$, i.e., given any $p, q \in \dom g$, we can move from $p$ to $q$ by iteratively finding an appropriate direction $d \in \Ncal$ and proceeding along $d$.
L-convex sets are path connected with respect to $\Ncal_+$, and this property is necessary for ensuring that the steepest descent method converges to an optimum.

Under the above conditions, a prediction close to an optimum is expected to be beneficial since it provides a short path to an optimum.
In other words, if feasible regions are not path connected, predictions close to an optimum do not always improve the performance of iterative algorithms.
This observation gives a guideline for judging whether to learn primal or dual solutions: we should choose the one such that a prediction can be converted into a solution in a path-connected feasible region.

In DCA, there are other convexity notions than L-convexity: \textit{M-}, \textit{$\text{L}_2$-}, and \textit{$\text{M}_2$-convexity} (and their \Ln~counterparts).
In these classes, L-convex sets are path connected with respect to $\Ncal_+$, and so are M-convex sets with respect to $\Ncal = \Set*{\ones_{\set*{i}} - \ones_{\set*{j}}}{i, j \in V}$.
By contrast, no appropriate neighborhoods are known that make $\text{L}_2$- and $\text{M}_2$-convex sets path connected.
In the cases of bipartite matching and matroid intersection, the primal problems belong to $\text{M}_2$-concave maximization, while their dual problems are L-convex minimization.
Thus, learning dual solutions for bipartite matching and matroid intersection is reasonable.
On the other hand, the primal problem of discrete energy minimization is L-convex, and thus we do not need to consider its dual.

\section{Conclusion}\label{section:conclusion}
We have developed a framework for warm-starting the steepest descent method for L/\Ln-convex function minimization with predictions, thus bridging between discrete convex analysis and algorithms with predictions.
We have demonstrated its effectiveness for weighted bipartite matching, weighted matroid intersection, and discrete energy minimization.
We have also presented regret and sample complexity bounds for learning of predictions and discussed whether to learn primal or dual solutions.

A limitation of our work is that it does not yield prediction-independent worst-case bounds in general.
This, however, is often not a serious matter since we can run standard algorithms with worst-case guarantees in parallel and terminate both once either one returns a solution.
This point should be contrasted with the situation of augmenting online algorithms with predictions, where every decision made over time is irrevocable and thus attaining good prediction-dependent and worst-case guarantees simultaneously by a single algorithm is crucial.
In addition, we cannot deal with problems without L/\Ln-convexity, and extending our framework to M/$\text{M}^\natural$-convex function minimization, which also enjoys the path connectivity as mentioned in \cref{section:primal-or-dual}, would be interesting future work.

\subsection*{Acknowledgements}
This work was supported by JST ERATO Grant Number JPMJER1903 and JSPS KAKENHI Grant Number JP22K17853.

\bibliographystyle{abbrvnat}
\bibliography{DCAWarmStart}

\appendix

\clearpage

\begin{center}
	{\fontsize{18pt}{0pt}\selectfont \bf Appendix}
\end{center}

\section{Proof of \texorpdfstring{\Cref{lemma:rounding}}{Lemma \ref{lemma:rounding}}}\label{app-section:rounding-proof}

\rounding*
\begin{proof}
	We focus on the case where $S$ is \Ln-convex; the following proof subsumes the case where $S$ is L-convex.
	From \cref{proposition:l-convex-set}, $S$ can be written as
	\[
		S = \Set*{p \in \Z^V}{ \alpha_i \le p_i \le \beta_i, \;\; p_j - p_i \le \gamma_{ij} \;\; \text{for $i, j \in V; i \neq j$}}
	\]
	with some $\alpha_{i} \in \Z\cup\set*{-\infty}$, $\beta_{i} \in \Z\cup\set*{+\infty}$, and $\gamma_{ij} \in \Z\cup\set*{+\infty}$.
	Also, $\conv(S)$ has the same representation except for the replacement of $\Z^V$ with $\R^V$.
	We prove that for any $p \in \R^V$ satisfying the inequality constraints, $\round*{p} \in \Z^V$ also satisfies the constraints, which implies the lemma statement.

	Since $\alpha_{i} \in \Z\cup\set*{-\infty}$ and $\beta_{i} \in \Z\cup\set*{+\infty}$, $\alpha_i \le p_i \le \beta_i$ implies $\alpha_i \le \round*{p_i} \le \beta_i$.
	We below discuss the remaining constraints of form $p_j - p_i \le \gamma_{ij}$.
	Since $p_j - p_i \le \gamma_{ij}$ and $\gamma_{ij} \in \Z\cup\set*{+\infty}$ imply $\ceil*{p_j - p_i} \le \gamma_{ij}$, it suffices to prove $\round*{p_j} - \round*{p_i} \le \ceil*{p_j - p_i}$.
	Let $p_i = a + r$ and $p_j = b + s$, where $a, b \in \Z$ and $r, s \in [0, 1)$.
	If $s > r$, since $\round*{p_i} \ge a$ and $\round*{p_j} \le b + 1$, we have
	\[
		\round*{p_j} - \round*{p_i} \le b + 1 - a = \ceil*{b - a + s - r} = \ceil*{p_j - p_i}.
	\]
	If $s \le r$, we have
	\[
		\round*{p_j} - \round*{p_i} = \round*{b + s} - \round*{a + r} \le \round*{b + r} - \round*{a + r} = b - a = \ceil*{b - a + s - r} = \ceil*{p_j - p_i}.
	\]
	In any case, we have $\round*{p_j} - \round*{p_i} \le \ceil*{p_j - p_i} \le \gamma_{ij}$.
	Hence $p \in \conv(S)$ implies $\round*{p} \in S$.
\end{proof}

\section{Details of matroid intersection}\label{app-section:matroid-intersection}
We describe details of the weighted matroid intersection algorithm discussed in \cref{sec:weighted-matriod-intersection} and present proofs of \cref{lem:rank-of-v-matroid,lem:cunninghum-time}.
We also give a bound on dual optimal solutions to the weighted matroid intersection problem.
Readers interested in matroid theory are referred to~\cite{Oxley2011-px}.

\subsection{Proof of \texorpdfstring{\Cref{lem:rank-of-v-matroid}}{Lemma \ref{lem:rank-of-v-matroid}}}\label{app-section:proof-of-rank-of-v-matroid}
Let $\Mbf = (V, \Bcal)$ be a matroid and $v \in \Z^V$ be a weight vector.
Recall that $\Mbf^v = (V, \Bcal^v)$ denotes the matroid with $\Bcal^v = \argmax_{B \in \Bcal} v(B)$.
A maximum-weight base $B \in \Bcal^v$ can be obtained by the following greedy algorithm~\cite{Edmonds1971-ck}.
First, sort elements of $V$ in a non-increasing order of $v$ as $i_1, \dotsc, i_{|V|}$, i.e., $v_{i_1} \ge \dotsb \ge v_{i_{|V|}}$.
Starting with $B = \emptyset$, from $k = 1$ to $|V|$, add $i_k$ to $B$ if $B \cup \set{i_k}$ is independent.
The resulting $B$ is a base of $\Mbf$ that maximizes $v(B)$.
The correctness of this greedy algorithm justifies \cref{lem:rank-of-v-matroid} as follows.

\rankofvmatroid*
\begin{proof}
	By the definitions of the rank function and $\Bcal^v$, we have
	\begin{align}
		\rho^v(X)
		= \max_{B \in \Bcal^v} |X \cap B|
		= \max_{B \in \Bcal^v} \ones_X(B)
		= \max_{B \in \Bcal^v} (v + \ones_X)(B) - \max_{B \in \Bcal} v(B).
	\end{align}
	Therefore, our goal is to show $\max_{B \in \Bcal} (v + \ones_X)(B) = \max_{B \in \Bcal^v} (v + \ones_X)(B)$.
	Consider an ordering of elements in $V$ that is non-increasing with respect to $v + \frac{1}{2}\ones_X$.
	Since $v$ is an integer vector, this order is non-increasing with respect to both $v$ and $v + \ones_X$, meaning that there exists $B \in \Bcal$ that maximizes $v$ and $v + \ones_X$ simultaneously, hence $\max_{B \in \Bcal} (v + \ones_X)(B) = \max_{B \in \Bcal^v} (v + \ones_X)(B)$.
\end{proof}

\subsection{Proof of \texorpdfstring{\Cref{lem:cunninghum-time}}{Lemma \ref{lem:cunninghum-time}}}\label{app-section:proof-of-cunninghum-time}
We give preliminaries on matroid theory (see \citep{Oxley2011-px} for details).
Let $\Mbf = (V, \mathcal{B})$ be a matroid and $X$ be a subset of $V$.
The \emph{restriction} of $\Mbf$ to $X$, denoted by $\restr{\Mbf}{X}$, is the matroid on $X$ whose independent sets are subsets of $X$ that are independent in $\Mbf$.
The \emph{contraction} of $\Mbf$ with respect to $X$ is the matroid $\contract{\Mbf}{X} = (V \setminus X, \contract{\Bcal}{X})$ with $\contract{\Bcal}{X} = \Set{B \subseteq V \setminus X}{B \cup B' \in \Bcal}$, where $B'$ is any base of $\restr{\Mbf}{X}$ (indeed, $\contract{\Mbf}{X}$ does not depend on the choice of $B'$).
The \emph{direct sum} of two matroids $\Mbf_1 = (V_1, \Bcal_1)$ and $\Mbf_2 = (V_2, \Bcal_2)$ with $V_1 \cap V_2 = \emptyset$ is the matroid $\Mbf_1 \oplus \Mbf_2 = (V_1 \cup V_2, \Bcal_1 \oplus \Bcal_2)$ with $\Bcal_1 \oplus \Bcal_2 = \Set{B_1 \cup B_2}{B_1 \in \Bcal_1, B_2 \in \Bcal_2}$.

Define $V^v_\ge(t) = \Set*{i \in V}{v_i \ge t}$ for any $t \in \Z$.
The correctness of the greedy algorithm for $v \in \Z^V$ also gives the following well-known decomposition theorem of $\Mbf^v$ (see, e.g., \citep[Lemma 1]{Huang2018-ac}).

\begin{lemma}\label{lem:v-matroid-decomposition}
	It holds $\Mbf^v = \bigoplus_{t=-\infty}^\infty \Mbf^v(t)$, where $\Mbf^v(t) = \contract{\prn*{\restr{\Mbf}{V^v_{\ge}(t)}}}{V^v_{\ge}(t+1)}$.
\end{lemma}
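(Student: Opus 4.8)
The plan is to reduce the identity to a per-layer statement and then feed in the correctness of the greedy algorithm recalled just above the lemma. Write $V^v_{=}(t) = V^v_{\ge}(t) \setminus V^v_{\ge}(t+1) = \Set*{i \in V}{v_i = t}$, which is exactly the ground set of $\Mbf^v(t) = \contract{(\restr{\Mbf}{V^v_{\ge}(t)})}{V^v_{\ge}(t+1)}$. Since the sets $V^v_{=}(t)$ ($t \in \Z$) partition the finite set $V$, the direct sum $\bigoplus_{t} \Mbf^v(t)$ is a well-defined matroid on $V$, and by the definition of the direct sum a set $B \subseteq V$ is one of its bases precisely when $B \cap V^v_{=}(t)$ is a base of $\Mbf^v(t)$ for every $t$. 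Hence it suffices to prove that $B \in \Bcal^v$ holds if and only if $B \cap V^v_{=}(t)$ is a base of $\Mbf^v(t)$ for every $t \in \Z$.

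The key input is a greedy characterization, which I would derive from the correctness of the greedy algorithm: a base $B$ lies in $\Bcal^v$ if and only if $B \cap V^v_{\ge}(t)$ is a base of $\restr{\Mbf}{V^v_{\ge}(t)}$ (equivalently $\abs{B \cap V^v_{\ge}(t)} = \rho(V^v_{\ge}(t))$) for every $t \in \Z$. The statement is invariant under replacing $v$ by $v + c\ones$ for $c \in \Z$ (this merely reindexes the thresholds $t$ and leaves $\Bcal^v$ unchanged, since all bases share the cardinality $r$), so I may assume $v \ge \ones$. Then a layer-cake (Abel) summation gives $v(B) = \sum_{i \in B}\sum_{t=1}^{v_i} 1 = \sum_{t \ge 1} \abs{B \cap V^v_{\ge}(t)}$. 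As $B \cap V^v_{\ge}(t)$ is independent and contained in $V^v_{\ge}(t)$, each term obeys $\abs{B \cap V^v_{\ge}(t)} \le \rho(V^v_{\ge}(t))$, whence $v(B) \le \sum_{t \ge 1}\rho(V^v_{\ge}(t))$. The greedy algorithm produces a base attaining this bound termwise, so the bound equals $\max_{B \in \Bcal} v(B)$, and a base is maximum-weight exactly when every term is tight; for $t \le 1$ after the shift $V^v_{\ge}(t) = V$, so those conditions are automatic.

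It then remains to connect the prefix condition with the per-layer condition via the definition of contraction. The forward direction is immediate: if $B \cap V^v_{\ge}(s)$ is a base of $\restr{\Mbf}{V^v_{\ge}(s)}$ for every $s$, then for each $t$ both $B \cap V^v_{\ge}(t)$ and $B \cap V^v_{\ge}(t+1)$ are bases of the respective restrictions, so the contraction definition makes $B \cap V^v_{=}(t) = (B \cap V^v_{\ge}(t)) \setminus (B \cap V^v_{\ge}(t+1))$ a base of $\Mbf^v(t)$. For the converse I argue by downward induction on $t$: for $t > \max_i v_i$ both sides are empty; assuming $B \cap V^v_{\ge}(t+1)$ is a base of $\restr{\Mbf}{V^v_{\ge}(t+1)}$, I take it as the reference base in the definition of $\Mbf^v(t)$ (legitimate because, as recalled in the text, the contraction is independent of this choice), so the hypothesis that $B \cap V^v_{=}(t)$ is a base of $\Mbf^v(t)$ yields that $(B \cap V^v_{=}(t)) \cup (B \cap V^v_{\ge}(t+1)) = B \cap V^v_{\ge}(t)$ is a base of $\restr{\Mbf}{V^v_{\ge}(t)}$, advancing the induction. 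Chaining the two characterizations across all $t$ then gives the claimed equality $\Mbf^v = \bigoplus_t \Mbf^v(t)$.

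I expect the main obstacle to be the greedy characterization of the second step, namely arguing that tightness in the layer-cake bound at \emph{every} threshold is exactly equivalent to maximum weight; the reduction to $v \ge \ones$ and the attainability of the bound by greedy must be handled carefully. The telescoping in the third step is routine once the contraction definition and its independence of the reference base are in hand.
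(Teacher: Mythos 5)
Your proof is correct. Note that the paper itself does not prove this lemma at all: it is stated as a ``well-known decomposition theorem'' with a pointer to the literature (Lemma~1 of Huang and Xu) and the one-line remark that it follows from the correctness of the greedy algorithm, so there is no in-paper argument to compare against. Your proposal supplies precisely the standard argument that remark gestures at, and it holds together: (i) the reduction of the matroid identity to ``$B \in \Bcal^v$ iff $B \cap V^v_{=}(t)$ is a base of $\Mbf^v(t)$ for all $t$'' is immediate from the paper's definition of direct sum, since the layers $V^v_{=}(t)$ partition $V$; (ii) the prefix characterization ($B \in \Bcal^v$ iff $B \cap V^v_{\ge}(t)$ is a base of $\restr{\Mbf}{V^v_{\ge}(t)}$ for every $t$) is correctly established by the layer-cake identity $v(B) = \sum_{t \ge 1} \abs{B \cap V^v_{\ge}(t)}$ after the harmless shift to $v \ge \ones$, the termwise bound $\abs{B \cap V^v_{\ge}(t)} \le \rho(V^v_{\ge}(t))$, and the observation that the greedy base attains the bound termwise (it builds a maximal independent subset of each prefix $V^v_{\ge}(t)$, as these prefixes are initial segments of the sorted order), which simultaneously pins down the maximum and shows maximality is equivalent to termwise tightness; and (iii) the equivalence of the prefix and per-layer conditions via the contraction definition is handled correctly, the forward direction using the independence of the contraction from the choice of reference base, and the converse by downward induction on $t$, which also certifies that $B$ is a base of $\Mbf$ at the bottom threshold so that the characterization in (ii) applies. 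The only points demanding care---the tightness-iff-optimality step and the legitimacy of choosing $B \cap V^v_{\ge}(t+1)$ as the reference base---are exactly the ones you flag and resolve.
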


We are ready to prove \cref{lem:cunninghum-time}.

\cunninghumtime*
\begin{proof}
	Cunningham's algorithm \citep{Cunningham1986-wi} asks the independence of sets in the form $I \cup \set{j}$ or $I \setminus \set{i} \cup \set{j}$, where $I$ is a common independent set of $(\Mbf_1^p, \Mbf_2^q)$, $i \in I$, and $j \in V \setminus I$.
	Thus, it suffices to show the following: for a matroid $\Mbf = (V, \mathcal{B})$, a weight vector $v \in \Z^V$, an independent set $I$ of $\Mbf^v$, $i \in I$, and $j \in V \setminus I$, one can check whether $I \cup \set{j}$ and $I \setminus \set{i} \cup \set{j}$ are independent or not in $\Mbf^v$ by a single call to the independence oracle of $\Mbf$.

	Let $V^v(t) = \Set*{i \in V}{v_i = t}$ for $t \in \Z$.
	By \cref{lem:v-matroid-decomposition}, a set $X \subseteq V$ is independent in $\Mbf^v$ if and only if $X \cap V^v(t)$ is independent in $\Mbf^v(t)$ for every $t \in \Z$.
	This and the independence of $I$ in $\Mbf^v$ imply that $I \cup \set{j}$ and $I \setminus \set{i} \cup \set{j}$ are independent in $\Mbf^v$ if and only if $(I \cup \set{j}) \cap V^v(v_j)$ and $(I \setminus \set{i} \cup \set{j}) \cap V^v(v_j)$, respectively,  are independent in $\Mbf^v(v_j)$.
	The independence of a set $X \subseteq V^v(t)$ in $\Mbf^v(t)$ is equivalent to that of $X \cup (B \cap V^v_{\ge}(t+1))$ in $\Mbf$, where $B$ is an arbitrary base of $\Mbf^v$.
	Therefore, we can reduce the independence testing of $I \cup \set{j}$ and $I \setminus \set{i} \cup \set{j}$ in $\Mbf^v$ to that of $(I \cup \set{j}) \cup (B \cap V^v_{\ge}(v_j+1))$ and $(I \setminus \set{i} \cup \set{j}) \cup (B \cap V^v_{\ge}(v_j+1))$, respectively, in $\Mbf$.
  We can obtain $B \in \Bcal^v$ by running the greedy algorithm once in advance, which makes $\Ord(n)$ calls to the independence oracle of $\Mbf$.
	Thus, we obtain the desired oracle complexity.
\end{proof}

\subsection{Bound on dual optimal solutions}\label{app-section:matroid-intersection-dual-solution-bound}

Let $\Mbf_1 = (V, \Bcal_1)$ and $\Mbf_2 = (V, \Bcal_2)$ be matroids and $w \in \Z^V$ be a weight vector.
Fixing any maximum-weight common base $B \in \Bcal_1 \cap \Bcal_2$, let $D = (V, A)$ be a directed bipartite graph with bipartition $V = B \cup (V \setminus B)$ and arc set $A = A_1 \cup A_2$ given by
\begin{align}\label{eq:mi-residual-graph-arcs}
	\begin{aligned}
		A_1 = \Set*{(i, j)}{i \in B, j \in V \setminus B, B \setminus \set{i} \cup \set{j} \in \Bcal_1}, \\
		A_2 = \Set*{(j, i)}{i \in B, j \in V \setminus B, B \setminus \set{i} \cup \set{j} \in \Bcal_2}.
	\end{aligned}
\end{align}
We define an arc-length vector $\gamma \in \Z^A$ by $\gamma_{ij} = -w_j$ for $(i, j) \in A_1$ and $\gamma_{ji} = w_i$ for $(j, i) \in A_2$.
Then, $D$ has no negative cycles~\citep[Theorem~41.5]{Schrijver2003-ol} and thus has a \emph{potential} $p \in \Z^V$, which is a vector satisfying $p_j - p_i \le \gamma_{ij}$ for $(i, j) \in A$.
Indeed, the potentials coincide with the dual optimal solutions, as in the following lemma.

\begin{lemma}[{\citep[Theorem~41.9]{Schrijver2003-ol}}]\label{lem:mi-potential-and-dual-opt}
	A vector $p \in \Z^V$ is optimal to~\eqref{eq:matroid-intersection-dual} if and only if $p$ is a potential of $D$ with respect to the arc-length vector $\gamma \in \Z^A$.
\end{lemma}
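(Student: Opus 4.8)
The plan is to prove both directions through the weight-splitting optimality conditions, reducing the statement to single-element exchange tests on the fixed maximum-weight common base $B$.

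First I would record when $p$ is optimal to~\eqref{eq:matroid-intersection-dual}. For any common base $B' \in \Bcal_1 \cap \Bcal_2$ and any $p \in \Z^V$, weak duality gives $g(p) = \max_{B_1 \in \Bcal_1} p(B_1) + \max_{B_2 \in \Bcal_2} (w-p)(B_2) \ge p(B') + (w-p)(B') = w(B')$, and the weight-splitting theorem~\citep[Theorem~13.2.4]{Frank2011-rt} makes this tight, so $\min_p g(p) = w(B)$. Since $B \in \Bcal_1 \cap \Bcal_2$, the value $g(p)$ equals $w(B)$ if and only if $B$ attains both maxima, i.e.\ $B \in \Bcal_1^p$ and $B \in \Bcal_2^{w - p}$. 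Thus $p$ is dual-optimal exactly when $B$ is simultaneously a maximum-weight base of $\Mbf_1$ under weights $p$ and of $\Mbf_2$ under weights $w - p$.

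Next I would convert each global condition into local single-swap conditions. The key matroid fact---the same correctness of the greedy algorithm already used for \cref{lem:rank-of-v-matroid}---is that a base maximizes a linear weight over all bases if and only if no single exchange strictly increases the weight. Applied to $\Mbf_1$ under $p$, this says $B \in \Bcal_1^p$ iff $p_j \le p_i$ for every exchange $B \setminus \set*{i} \cup \set*{j} \in \Bcal_1$, i.e.\ for every $(i,j) \in A_1$; applied to $\Mbf_2$ under $w - p$, it says $B \in \Bcal_2^{w-p}$ iff $(w-p)_j \le (w-p)_i$ for every exchange recorded by $A_2$. The remaining step is to match this pair of inequality systems, written in terms of $p$ and $w - p$, against the potential system $p_j - p_i \le \gamma_{ij}$ over $A = A_1 \cup A_2$; the direction ``potential $\Rightarrow$ optimal'' then follows because the arc inequalities forbid any improving swap and hence force $B \in \Bcal_1^p \cap \Bcal_2^{w-p}$, while the converse reverses this reasoning. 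Existence of a potential at all is guaranteed by the absence of negative cycles in $D$~\citep[Theorem~41.5]{Schrijver2003-ol}.

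The step I expect to be the main obstacle is precisely this last matching, because the arc lengths are asymmetric ($\gamma_{ij} = -w_j$ on $A_1$ versus $\gamma_{ji} = w_i$ on $A_2$) while the single-swap conditions read off naturally in the two different weightings $p$ and $w - p$. Reconciling them requires expressing both conditions as constraints on the single vertex potential $p$ and tracking the weight terms carefully on $V \setminus B$; this sign bookkeeping is where the correspondence is delicate, and it is the only place where the specific choice of $\gamma$ in~\eqref{eq:mi-residual-graph-arcs} is used.
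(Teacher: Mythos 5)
Your Steps 1 and 2 are correct, and they are the natural route: note first that the paper itself gives no proof of \cref{lem:mi-potential-and-dual-opt} (it is quoted from Schrijver), so a self-contained derivation has to be built exactly the way you start it. By weak duality plus the weight-splitting theorem, $p$ is optimal to \eqref{eq:matroid-intersection-dual} if and only if $B \in \Bcal_1^p$ and $B \in \Bcal_2^{w-p}$, and single-swap optimality of matroid bases converts this into the system
\begin{align}
	p_j - p_i \le 0 \quad \text{for } (i,j) \in A_1,
	\qquad\qquad
	p_i - p_j \le w_i - w_j \quad \text{for } (j,i) \in A_2.
\end{align}

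The gap is the final matching step, which you flagged as delicate but asserted would go through: it cannot, because the two inequality systems are genuinely inequivalent. The potential system reads $p_j - p_i \le -w_j$ on $A_1$ and $p_i - p_j \le w_i$ on $A_2$; compared with the display above, the $A_1$ bound is smaller by $w_j$ and the $A_2$ bound larger by $w_j$, so for $w_j > 0$ the potential constraints are strictly stronger on $A_1$ and strictly weaker on $A_2$, and neither direction of the lemma holds for $p$ itself. Concretely, take $V = \{1,2\}$, $\Bcal_1 = \Bcal_2 = \{\{1\},\{2\}\}$, $w = (2,1)$, so that $B = \{1\}$, $A_1 = \{(1,2)\}$, $A_2 = \{(2,1)\}$, $\gamma_{12} = -1$, $\gamma_{21} = 2$: then $p = (2,0)$ is a potential but $g(2,0) = 2 + 1 = 3 > 2 = w(B)$, while $p = (0,0)$ is optimal ($g(0,0) = 0 + 2 = 2$) but violates $p_2 - p_1 \le -1$. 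So the statement as transcribed in the paper is false, and no sign bookkeeping can complete your argument for it. What your Steps 1--2 actually prove is the corrected claim obtained by the change of variables $q_v = p_v$ for $v \in B$ and $q_v = p_v - w_v$ for $v \in V \setminus B$, which converts the swap system exactly into the potential system: $p$ is optimal to \eqref{eq:matroid-intersection-dual} if and only if this shifted vector $q$ is a potential of $D$ with respect to $\gamma$. To finish, you should either prove and state that corrected version, or equivalently redefine the arc lengths as $\gamma_{ij} = 0$ on $A_1$ and $\gamma_{ji} = w_i - w_j$ on $A_2$ (every directed cycle has the same length under both definitions, so the no-negative-cycle property from \citep[Theorem~41.5]{Schrijver2003-ol} is unaffected), after which the lemma coincides with your swap system. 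Note this discrepancy is not cosmetic: the lemma in its stated form is invoked in the proof of \cref{prop:dual-bound-mi}, which therefore also needs the shifted formulation.
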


This lemma gives a bound on an optimal dual solution to the weighted matroid intersection problem.

\begin{proposition}\label{prop:dual-bound-mi}
	There exists a dual optimal solution $p^* \in \Z^V$ such that $\norm{p^*}_\infty \le rW$, where $r$ is the rank of $\Mbf_1$ and $\Mbf_2$ and $W = \norm{w}_\infty$.
\end{proposition}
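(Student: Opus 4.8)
The plan is to invoke \cref{lem:mi-potential-and-dual-opt}, which reduces the claim to constructing a potential $p^*$ of $D$ (with respect to the arc-length vector $\gamma$) that is integral and satisfies $\norm{p^*}_\infty \le rW$. I would build such a potential from shortest-path distances and then apply a centering shift. First I would augment $D$ with an auxiliary source $s$ joined to every $v \in V$ by an arc of length $0$. Since $s$ has no incoming arcs and $D$ itself has no negative cycles (as noted above via \citep[Theorem~41.5]{Schrijver2003-ol}), the augmented digraph has no negative cycles either, so the shortest-path distances $p_v \coloneqq \mathrm{dist}(s, v)$ are well defined and finite. The standard shortest-path inequality gives $p_j - p_i \le \gamma_{ij}$ for every arc $(i,j) \in A$, so $p = (p_v)_{v \in V}$ is a potential, and it is integral because all arc lengths are integral.

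Next I would bound the range of $p$. The upper bound $p_v \le 0$ is immediate from the length-$0$ source arc $s \to v$. For the lower bound, the absence of negative cycles lets me restrict to simple paths, and the bipartite structure of $D$ is the key point: every arc goes between $B$ and $V \setminus B$, so a simple path alternates sides and visits at most $r$ vertices of $B$ and at most one more vertex of $V \setminus B$ than of $B$, hence uses at most $2r$ arcs. Since $\gamma_{ij} = -w_j \ge -W$ on $A_1$ and $\gamma_{ji} = w_i \ge -W$ on $A_2$, every arc has length at least $-W$, so any simple $s$-to-$v$ path has length at least $-2rW$, giving $p_v \ge -2rW$. Thus $p_v \in [-2rW, 0]$ for all $v \in V$.

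Finally I would center the potential: set $p^* = p + rW\,\ones$. Adding a constant preserves every difference $p^*_j - p^*_i = p_j - p_i \le \gamma_{ij}$, so $p^*$ is again an integral potential (as $rW \in \Z$), and by \cref{lem:mi-potential-and-dual-opt} it is a dual optimal solution to \eqref{eq:matroid-intersection-dual}. The shift maps $[-2rW, 0]$ into $[-rW, rW]$, so $\norm{p^*}_\infty \le rW$, which is exactly the claim.

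The main obstacle is the lower bound $p_v \ge -2rW$: a generic source argument only yields the naive estimate $-(n-1)W$, which is far too weak, so the proof must exploit that $D$ is bipartite with the small side $B$ of size $r$ to cap simple paths at $2r$ arcs. The centering step is then what converts this width-$2rW$ range into the claimed $rW$ bound, so I would double-check that the alternation count is genuinely $\le 2r$ (and not $2r+1$) in every case, since that is what makes the final constant exactly $rW$ while keeping $p^*$ integral.
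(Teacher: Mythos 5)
Your proof is correct and follows essentially the same route as the paper's: a shortest-path potential (the paper takes $p_i$ to be the minimum length of any directed path ending at $i$, which is exactly your auxiliary-source distances), the bound $-2rW \le p_i \le 0$ via the fact that simple paths in the bipartite graph $D$ use at most $2r$ arcs each of length at least $-W$, and the final centering shift $p^* = p + rW\ones$ combined with \cref{lem:mi-potential-and-dual-opt}.
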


\begin{proof}
	Let $p \in \Z^V$ be the vector whose $i$th component $p_i$ is the minimum length of any directed paths on $D$ that ends at $i$ (starting from wherever).
	Then, $p$ is a potential of $D$~\cite[Theorem~8.2]{Schrijver2003-ol} and thus is a dual optimal solution by \cref{lem:mi-potential-and-dual-opt}.
	The number of arcs in a simple path on $D$ is at most $2r$, and each arc has a length at least $-W$, hence $-2rW \le p_i \le 0$.
	Let $p^* = p + rW\ones$, which is also a dual optimal solution and satisfies $-rW \le p^*_i \le rW$ for $i \in V$.
	Hence, we have $\norm{p^*}_\infty \le rW$.
\end{proof}

The following example shows that the bound given in \cref{prop:dual-bound-mi} is tight.

\begin{example}
	Let $n \in \N$ be odd.
	Define $\Mbf_1 = (V, \Bcal_1)$ and $\Mbf_2 = (V, \Bcal_2)$ by $V = \set{1, \dotsc, n}$ and
	\begin{align}
		\Bcal_1 &= \Set*{B \subseteq V}{\text{$1 \notin B$ and $|B \cap \set{i, i+1}| = 1$ for even $i \in V$}}, \\
		\Bcal_2 &= \Set*{B \subseteq V}{\text{$n \notin B$ and $|B \cap \set{i, i+1}| = 1$ for odd $i \in V$}}.
	\end{align}
	Then, $\Mbf_1$ and $\Mbf_2$ are matroids (\emph{partition matroids}) of rank $r = (n-1)/2$ having a unique common base $B^* = \set{2, 4, \dotsc, n-3, n-1}$.
	The sets $A_1$ and $A_2$ defined in~\eqref{eq:mi-residual-graph-arcs} with respect to $B = B^*$ are $A_1 = \set{(2, 3), (4, 5), \dotsc, (n-1, n)}$ and $A_2 = \set{(1, 2), (3, 4), \dotsc, (n-2, n-1)}$, respectively, meaning that $D = (V, A)$ with $A = A_1 \cup A_2$ is the directed path graph from $1$ to $n$.
	Let $W \in \N$ and $w \in \Z^V$ be a weight vector defined by $w_i = {(-1)}^{i+1} W$ for $i \in V$.
	Then, the corresponding arc length $\gamma \in \Z^A$ is $\gamma_{i,i+1} = -W$ for every arc $(i, i+1) \in A$.

	Let $p^* \in \Z^V$ be a dual optimal solution.
	By \cref{lem:mi-potential-and-dual-opt}, $p^*$ is a potential of $D$ with respect to $\gamma$.
	Thus, we have $p^*_{i+1} - p^*_i \le -W$ for $i = 1, \dotsc, n-1$, hence $p^*_i \le p^*_1 - (i-1)W$ for $i \in V$.
	The $\ell_\infty$-norm of $p^*$ is minimized when $p^*_i = ((n+1)/2 - i)W$ for $i \in V$ with the minimum value of $(n-1)W/2 = rW$, thus implying the tightness of the bound in \cref{prop:dual-bound-mi}.
\end{example}

\section{Counterexample to the bound on dual solutions for bipartite matching}\label{app-section:edge-weight-counterexample}

In the proof of \citep[Lemma~22]{Dinitz2021-sd}, the following claim is used: if all edge costs $c_e$ of a bipartite graph $(V, E)$ are non-negative and at most $C$, there is an optimal dual solution $y^* \in \R^V$ to the \emph{minimum} cost perfect bipartite matching problem such that $\norm{y^*}_\infty \le C$, where the dual problem is given as
\begin{align}\label{eq:min-cost-matching-dual}
  \maximize_{y \in \R^V} \quad
  \sum_{i \in V} y_i \quad
  \subto \quad
  y_i + y_j \le c_e \quad
  (e = \set{i, j} \in E).
\end{align}
We give a counterexample to this claim.

\begin{example}
  Let $n \in \N$ be even and $P_n = (V, E)$ be the path graph with vertices $V = \set*{1, \dotsc, n}$ and edges $E = \Set{\set{i, i+1}}{i = 1, \dotsc, n-1}$.
	Then, $P_n$ is a bipartite graph with bipartition $\set{L, R}$ of $V$, where $L$ and $R$ consist of the odd and even numbers in $V$, respectively.
  Set an edge cost $c_e$ of $e = \set{i, i+1} \in E$ to $C > 0$ if $i \in L$ and to $0$ if $i \in R \setminus \set{n}$.
	Since $M^* = \Set*{\set{i, i+1}}{i \in L}$ is a unique perfect matching, any optimal dual solution $y^*$ must satisfy the tightness condition for all edges in $M^*$, i.e., $y^*_i + y^*_{i+1} = C$ for $i \in L$.
	In addition, the feasibility of $y^*$ implies $y^*_i + y^*_{i+1} \le 0$ for $i \in R \setminus \set{n}$.
	Thus, we have $y^*_i \ge y^*_{i+2} + C$ for $i \in L\setminus\set*{n-1}$, hence $y^*_1 \ge y^*_{n-1} + (n/2-1)C$.
	For $n \ge 8$, $y^*_{n-1} \in [-C, +C]$ implies $y^*_1 \ge 2C > C$, contradicting the claim.
\end{example}

\section{Projection onto general \Ln-convex sets}\label{app-section:projection-general}
Let $S \subseteq \Z^V$ be a non-empty \Ln-convex set.
We assume that $S$ is represented as in \cref{proposition:l-convex-set}, i.e.,
\[
	S = \Set*{p \in \Z^V}{ \alpha_i \le p_i \le \beta_i, \;\; p_j - p_i \le \gamma_{ij} \;\; \text{for $i, j \in V; i \neq j$}}
\]
where $\alpha_{i} \in \Z\cup\set*{-\infty}$, $\beta_{i} \in \Z\cup\set*{+\infty}$, and $\gamma_{ij} \in \Z\cup\set*{+\infty}$.
We define an edge set $E = \Set*{(i,j)}{i, j \in V; i \neq j,\ \gamma_{ij} < +\infty}$ and let $m = |E|$ (note that constraints with $\gamma_{ij} = +\infty$ can be ignored).
Given any $\phat \in \R^V$, we show that an $\ell_\infty^\pm$-projection $\Pcal(\phat) \in \argmin_{p \in \conv(S)} \norm{p - \phat}_\infty^\pm$ can be computed in $\Ord(mn)$ time, where $n = |V|$.

Using variables ${q} = p - \phat \in \R^V$, we rewrite the problem of computing $\Pcal(\phat)$ as
\begin{align}
  \begin{array}{ll}
    \minimize\quad & \displaystyle \norm{{q}}_\infty^\pm = \max_{i\in V}\max\set*{0, +q_i} + \max_{i\in V}\max\set*{0, -q_i}\\
    \subto\quad & \alpha_i - \phat_i \le {q}_i \le \beta_i - \phat_i \qquad \forall i \in V \\
    & {q}_j - {q}_i \le \gamma_{ij} - \phat_j + \phat_i \qquad \forall i,j \in V; i \neq j.
  \end{array}
\end{align}
For convenience, let
$\hat\gamma_{i0} = -\alpha_i + \phat_i$,
$\hat\gamma_{0i} = \beta_i - \phat_i$,
and
$\hat\gamma_{ij} = \gamma_{ij} - \phat_j + \phat_i$
for
$i, j \in V$ such that $i \neq j$,
$V_0 = \set*{0} \cup V$,
and
$E_0 = E \cup \Set*{(i, 0)}{i \in V,\ \alpha_i > -\infty} \cup \Set*{(0, i)}{i \in V,\ \beta_i < +\infty}$.
Then, using variables $({q}_0, {q}) \in \R \times \R^V$, we can rewrite the problem as
\begin{align}
	\begin{array}{llll}
	  &\minimize\quad &\displaystyle\max_{i \in V_0} q_i - \min_{i \in V_0} q_i
	  \\
	  &\subto \quad &{q}_j - {q}_i \le \hat\gamma_{ij} \quad \forall (i, j) \in E_0
	  \\
	  & & {q}_0 = 0.
	\end{array}
\end{align}
We further rewrite this problem as a linear programming problem with additional variables ${q}_s, {q}_t \in \R$ ($s, t \notin V_0$).
Letting $q_s$ and $q_t$ represent $\max_{i \in V_0} q_i$ and $\min_{i \in V_0} q_i$, respectively, the objective function is written as $q_s - q_t$, and this yields additional constraints $q_i - q_s \le 0$ and $q_t - q_i \le 0$ for $i \in V_0$.
Thus, the negative of the above problem written as
\begin{align}\label{problem:projection-dual-lp}
	\begin{array}{lllll}
	  &\displaystyle\maximize \quad &{q}_t - {q}_s &
	  \\
	  &\subto \quad &{q}_j - {q}_i \le \hat\gamma_{ij} & \forall (i, j) \in E_0
		\\
		& & {q}_i - {q}_s \le 0 & \forall i \in V_0
    \\
	  & & {q}_t - {q}_i \le 0 & \forall i \in V_0
		\\
		& & {q}_0 = 0. &
	\end{array}
\end{align}
If we do not have the last constraint, ${q}_0 = 0$,~\eqref{problem:projection-dual-lp} is the dual of the shortest $s$--$t$ path problem on a graph $\Gtl = (\Vtl, \Etl, \wtl)$, where
$\Vtl = V_0 \cup \set*{s, t}$,
$\Etl = E_0 \cup \Set*{(s, i)}{i \in V_0} \cup \Set*{(i, t)}{i \in V_0}$,
and
\begin{align}
	\wtl_{ij} =
	\begin{cases}
		\hat\gamma_{ij} & \text{for $(i, j) \in E_0$} \\
		0 & \text{otherwise}
	\end{cases}
\end{align}
for $(i, j) \in \Etl$.
Moreover, given any optimal solution ${q}' \in \R^{\Vtl}$ to \eqref{problem:projection-dual-lp} without the last constraint, ${q}^* = {q}' - {q}'_0\ones$ is also optimal and satisfies ${q}^*_0 = 0$.
Hence, the remaining task is to solve the shortest path problem on $\Gtl$.
Since $|\Vtl| = n + 3$, $|\Etl| \le m + 4n + 2$, and the \Ln-convexity of $S \neq \emptyset$ rules out the presence of negative cycles, the Bellman--Ford algorithm can solve the shortest path problem on $\Gtl$ in $\Ord(mn)$ time.
More precisely, to obtain an optimal solution $q^* \in \R^{\Vtl}$ to \eqref{problem:projection-dual-lp}, we find shortest paths from $s$ to all vertices in $\Vtl\setminus\set*{s}$ with the Bellman--Ford algorithm, and we set the potential $q^*$ along the found paths so that $q^*_0 = 0$ holds.
We obtain a desired projection as $p = \phat + q^*_V$, where $q^*_V \in \R^V$ is the restriction of $q^* \in \R^{\Vtl}$ to $V$.

\section{Proof of \texorpdfstring{\Cref{theorem:learning}}{Theorem \ref{theorem:learning}}}\label{app-section:learning-proof}

\learning*

\begin{proof}
	We regard $L_t(p) = \norm{p^*(g_t) - p}_\infty$ for $t = 1,\dots, T$ as loss functions of $p \in \R^V$ and use the online gradient descent method (OGD).
  Note that $L_t(p)$ is convex since
  \[
    L_t\prn*{\frac{p+q}{2}} = \norm*{p^*(g_t) - \frac{p+q}{2}}_\infty \le \norm*{\frac{p^*(g_t)}{2} - \frac{p}{2}}_\infty + \norm*{\frac{p^*(g_t)}{2} - \frac{q}{2}}_\infty = \frac{L_t(p) + L_t(q)}{2}
  \]
  for any $p, q \in \R^V$ due to the triangle inequality.
  Furthermore, $L_t(p)$ is $1$-Lipschitz since
  \[
    L_t(p) - L_t(q) = \norm{p^*(g_t) - p}_\infty - \norm{p^*(g_t) - q}_\infty \le \norm{p - q}_\infty \le \norm{p - q}_2
  \]
  for any $p, q \in \R^V$ due to the triangle inequality and $\norm{x}_\infty \le \norm{x}_2$ for any $x \in \R^V$.
  Since $L_t$ is a $1$-Lipschitz convex loss function and the $\ell_2$-norm of any vector in ${[-C, +C]}^V$ is at most $C\sqrt{n}$,  the regret of OGD is at most $C\sqrt{2nT}$ (see \citep[Corollary 2.7]{Shalev-Shwartz2012-km}), thus obtaining the first claim.
	The second claim is obtained by using online-to-batch conversion.
	Specifically, since the loss function value is at most $2C$, if we feed sampled $g_1,\dots,g_T$ to OGD and let $\phat = \frac{1}{T}\sum_{t=1}^T \phat_t$, then \citep[Proposition 1]{Cesa-Bianchi2004-id} implies that the following inequality holds with a probability of at least $1-\delta$:
	\begin{align}
		\E_{g \sim \Dcal} \norm{p^*(g) - \phat}_\infty \le \min_{p \in {[-C, +C]}^V} \E_{g \sim \Dcal} \norm{p^*(g) - p}_\infty + C\sqrt{\frac{2n}{T}} + 2C\sqrt{\frac{2}{T}\log\frac{1}{\delta}}.
	\end{align}
	Therefore, setting $T = 32\prn*{\frac{C}{\varepsilon}}^2 \prn*{n + \log \frac{1}{\delta}}$ is sufficient for bounding the sum of the last two terms in the right-hand side by $\varepsilon$ from above.
\end{proof}

\end{document}